\documentclass[pdflatex, sn-basic]{sn-jnl}
\usepackage{graphicx}%
\usepackage{multirow}%
\usepackage{amsmath,amssymb,amsfonts}%
\usepackage{amsthm}%
\usepackage{mathtools}%
\usepackage{mathrsfs}%
\usepackage[title]{appendix}%
\usepackage{xcolor}%
\usepackage{textcomp}%
\usepackage{manyfoot}%
\usepackage{booktabs}%
\usepackage{subcaption}%
\usepackage{placeins}%

\theoremstyle{thmstyleone}%
\newtheorem{theorem}{Theorem}%
\newtheorem{proposition}[theorem]{Proposition}%
\newtheorem{lemma}[theorem]{Lemma}%

\theoremstyle{thmstylethree}%
\newtheorem{definition}[theorem]{Definition}%

\theoremstyle{thmstyletwo}%
\newtheorem{remark}[theorem]{Remark}%
\begin{document}

\title{Effective Covariance Dynamics in Solvable High-Dimensional GANs}

\author[1]{\fnm{Andrew} \sur{Bond}}\email{abond19@ku.edu.tr}

\author*[1,2]{\fnm{Zafer} \sur{Do\u{g}an}}\email{zdogan@ku.edu.tr}

\affil[1]{\orgdiv{MLIP Group, KUIS AI Center}, \orgname{Ko\c{c} University}, \orgaddress{\city{\.{I}stanbul}, \country{Turkey}}}

\affil[2]{\orgdiv{Dept. of EE Engineering}, \orgname{Ko\c{c} University}, \orgaddress{\city{\.{I}stanbul}, \country{Turkey}}}

\abstract{We study a solvable high-dimensional model of generative adversarial network (GAN) training in which a linear generator learns a low-dimensional subspace from data with structured latent covariance. Prior solvable GAN analyses assume unconditional signals with diagonal latent covariance; we extend the multi-feature discriminator setting to class-dependent, correlated, and non-zero-mean latent structure. For the quadratic energy discriminator, all such heterogeneity enters the dynamics through a probability-weighted effective second moment. We prove that the stochastic microscopic training process converges, in the high-dimensional limit, to deterministic ordinary differential equations governed by this effective covariance. In the matched-covariance specialization, the stability analysis yields a mode-wise solvable interval determined by the learning rates and noise level: learning begins when the leading effective eigenvalue crosses the lower threshold, while full recovery requires all relevant effective modes to remain within the interval. This reveals a signal-boosting mechanism: low-rank correlations can lift weak directions above the learnability threshold, whereas overly strong correlations destabilize recovery. Numerical simulations validate the ODE, phase boundary, and boosting mechanism. Experiments on MNIST, FashionMNIST, and CIFAR-10 further show that informed generator covariance improves alignment with the data-driven reference subspace.}
\keywords{online learning algorithms, generative adversarial networks, subspace learning, high-dimensional dynamics, feature correlation}

\maketitle

\section{Introduction}\label{intro}

Generative adversarial networks (GANs)~\citep{gan} are trained through a coupled minimax game in which the generator and discriminator evolve simultaneously. This interaction makes their dynamics substantially harder to analyze than those of standard empirical-risk minimization: even simple games may exhibit oscillations, instability, or collapse. A large body of work has therefore studied stabilization mechanisms and convergence properties of GAN training, including Wasserstein objectives~\citep{wgan}, gradient penalties~\citep{wgan_gp,mescheder2018converge}, unrolled or optimistic updates~\citep{unrolled_gan,daskalakis2018training}, local stability analyses~\citep{nagarajan2017gradient,mescheder2018converge}, and two-timescale update rules~\citep{ttur}. These approaches provide important algorithmic and game-theoretic insights, but exact descriptions of the stochastic training trajectory remain rare.

A complementary route is to study solvable high-dimensional models, where the microscopic weights remain stochastic but low-dimensional macroscopic observables converge to deterministic dynamics. In this direction, \citet{solvable} introduced a single-layer solvable GAN model whose training dynamics can be characterized exactly in the high-dimensional limit. Their analysis showed that learning rates and noise levels determine sharply separated regimes of successful recovery, oscillation, failure, and mode collapse. More recently, \citet{multifeaturegan} extended this framework to multi-feature discriminators, showing that simultaneous discriminator features induce coupled subspace-learning dynamics rather than purely sequential feature recovery.

Despite these advances, existing solvable GAN analyses retain a restrictive structural assumption: the latent signal covariance is fixed, unconditional, and diagonal. This assumption is analytically convenient because the latent coordinates behave as independent modes. It is also limiting. Real datasets often contain class-dependent feature strengths, shared factors expressed with different amplitudes across classes, cross-feature correlations, and non-zero class means. Such structure is central in conditional generative modeling~\citep{cgan,acgan}, latent representation learning~\citep{infogan,beta_vae,locatello2019challenging}, and modern image generators whose latent spaces encode correlated semantic factors~\citep{stylegan}. Yet the effect of this structured covariance on exact GAN training dynamics is not understood. This motivates the central question of this work:

\begin{quote}
\emph{How do conditional and correlated latent covariance structures modify the exact high-dimensional training dynamics of solvable multi-feature GANs?}
\end{quote}

We answer this question by exploiting a structural property of the multi-feature energy discriminator. Since the discriminator depends on samples through quadratic projection energies, the macroscopic dynamics depend on the data distribution only through second moments. Consequently, class-dependent covariance, cross-feature correlations, and class means collapse into a single probability-weighted effective covariance. This turns a heterogeneous multi-class latent model into an analytically tractable solvable GAN model: the limiting ODE has the same form as in the unconditional case, but with the diagonal latent covariance replaced by an effective covariance matrix.

The resulting dynamics reveal a mechanism that is absent from diagonal analyses. Off-diagonal covariance couples latent directions, allowing weak coordinates to combine into a stronger effective direction. In the stability analysis, the onset of learning is governed by the largest eigenvalue of the effective covariance rather than by individual diagonal entries. Thus, a low-rank correlation spike can lift a collection of individually weak coordinates above the noise threshold and initiate learning. The same mechanism has a stability limit: if the leading effective eigenvalue becomes too large, the perfect-recovery fixed point loses stability. This yields an explicit solvable region in learning-rate, noise, and covariance space. Our contributions are as follows.
\begin{itemize}
    \item \textit{Exact effective-covariance dynamics.}
We derive the high-dimensional macroscopic ODE for multi-feature GAN training with class-dependent, correlated, and non-zero-mean latent structure. Despite this heterogeneity, the conditional model collapses to the same ODE form as the unconditional model after replacing the latent covariance by a probability-weighted effective second moment.
\item \textit{Spectral solvable region and signal boosting.}
In the matched true/generated covariance and equal-noise specialization, we characterize the stability of the failure and recovery fixed points. The onset of learning is governed by the leading effective eigenvalue, while full recovery requires all effective modes to lie within a solvable interval determined by the learning rates and noise level. This spectral view reveals a signal-boosting mechanism: low-rank off-diagonal structure can lift weak coordinates above the learnability threshold, whereas overly strong correlations destabilize recovery.
\item \textit{Generator covariance as a predictive modeling lever.}
We show theoretically and empirically that the generator covariance affects not only convergence speed but also the limiting subspace selected by the dynamics. Numerical simulations match the ODE trajectories and predicted phase boundary, while experiments on MNIST, FashionMNIST, and CIFAR-10 show that informed conditional covariance recovers the data-driven reference subspace more accurately and produces coherent class-conditional samples within this solvable linear-generator model.
\end{itemize}

\textbf{Notation.} Bold capitals denote matrices and bold lowercase denote vectors. We write $\mathbf{A}^{\top}$ for transpose, $\operatorname{diag}(\cdot)$ for diagonalization, and $\mathbf{I}$ for the identity. The true, generator, and discriminator subspaces are $\mathbf{U}$, $\mathbf{V}$, and $\mathbf{W}$. All limiting dynamics are studied in the high-dimensional regime $n\to\infty$ with fixed intrinsic dimension $d$ and continuous time $t=k/n$.

\section{Related Work}\label{sec:related_work}

\textbf{GAN optimization and training dynamics} The instability of GAN training has motivated a broad literature on alternative objectives, regularization, and game-dynamical algorithms. Wasserstein GANs~\citep{wgan} and gradient-penalized variants~\citep{wgan_gp} improve the discriminator signal and stabilize training. Other approaches study local convergence and regularization~\citep{nagarajan2017gradient,mescheder2018converge}, two-timescale updates~\citep{ttur}, unrolled optimization~\citep{unrolled_gan}, and optimistic methods for reducing cycling in adversarial games~\citep{daskalakis2018training}. These works typically analyze convergence near equilibria or propose stabilization mechanisms for broad classes of GANs. Our goal is different: we derive an exact high-dimensional description of the stochastic training trajectory for a solvable multi-feature GAN and use it to expose how latent covariance structure changes learning.

\noindent\textbf{Solvable high-dimensional GAN models} The closest line of work develops solvable high-dimensional models of GAN training. \citet{solvable} analyzed a single-layer GAN in which macroscopic overlaps converge to deterministic ODEs, while the microscopic weights remain stochastic. This made it possible to characterize successful recovery, oscillation, failure, and mode collapse as distinct dynamical regimes. \citet{multifeaturegan} extended the model to multi-feature discriminators, showing that simultaneous discriminator features yield coupled dynamics and faster subspace recovery than sequential single-feature learning. We build directly on this multi-feature framework, but replace the diagonal unconditional latent covariance by conditional, correlated, and non-zero-mean latent structure. The main new result is that this broader model remains solvable because all such structure enters through an effective second moment.

\noindent\textbf{Structured latent variables and conditional generative models} Conditional and structured latent representations are central in generative modeling. Conditional GANs~\citep{cgan} and auxiliary-classifier GANs~\citep{acgan} incorporate label information into the generator and discriminator, while representation-learning methods such as InfoGAN~\citep{infogan}, and later disentanglement analyses~\citep{locatello2019challenging} study how latent factors can encode interpretable variation. Style-based generators further demonstrate that learned latent spaces often organize semantic factors in correlated ways~\citep{stylegan}, and allow for interpretable controls~\citep{ganspace}. These works motivate the importance of latent structure, but they do not give exact high-dimensional training dynamics. Our contribution is to show, in a solvable GAN model, how class-dependent covariance, low-rank correlations, and class means alter the dynamics through a single effective covariance.

\noindent\textbf{Spiked covariance and high-dimensional subspace structure} Our data model is based on the spiked covariance model~\citep{spiked_covariance}, a classical framework for studying low-rank signal recovery in high dimensions. Related spectral analyses characterize when signal eigenvectors separate from noise and become recoverable~\citep{bbp,benaychgeorges2012}. In our setting, however, the subspace is not estimated directly from a sample covariance matrix. It is learned indirectly through adversarial interaction between the generator and discriminator. The effective-covariance reduction therefore connects spiked high-dimensional signal models with exact adversarial training dynamics: the covariance spectrum governs the solvable region, while the GAN dynamics determine whether the generator converges to the corresponding subspace.

\section{Problem Setup}\label{sect:background}

We study a high-dimensional single-layer GAN as a solvable model of generative subspace learning. The data model, generator, discriminator, and macroscopic variables follow~\citet{solvable,multifeaturegan}, but the latent signal model is generalized to allow conditional covariance, cross-feature correlation, and non-zero class means.

\subsection{Structured Spiked Data Model}

At iteration $k$, the observed sample $\mathbf{y}_k\in\mathbb{R}^n$ is generated by
\begin{equation}
    \mathbf{y}_k
    =
    \mathbf{U}\mathbf{c}_k
    +
    \sqrt{\eta_T}\,\mathbf{a}_k,
    \label{eq:true_data_model}
\end{equation}
where $\mathbf{U}\in\mathbb{R}^{n\times d}$ has orthonormal columns, $\mathbf{c}_k\in\mathbb{R}^d$ is the latent coefficient vector, $\mathbf{a}_k\sim\mathcal{N}(0,\mathbf{I}_n)$ is isotropic noise, and $\eta_T>0$ controls the true-data noise level. The ambient dimension $n$ is large, while $d\ll n$ is fixed. The columns of $\mathbf{U}$ define the target low-dimensional subspace.

The classical solvable setting assumes an unconditional Gaussian latent vector $\mathbf{c}_k\sim\mathcal{N}(0,\boldsymbol{\Lambda})$ with diagonal $\boldsymbol{\Lambda}$~\citep{solvable,multifeaturegan}. This corresponds to a single feature-strength profile shared by all samples and independent latent coordinates. We instead allow heterogeneous latent structure.

\subsubsection{Conditional and Structured Latent Covariance}\label{subsubsec:conditional}

Let $l_k\in\{1,\ldots,L\}$ be a discrete label with $\mathbb{P}(l_k=\ell)=\pi_\ell$. Conditioned on $l_k=\ell$, the latent vector follows
\begin{equation}
    \mathbf{c}_k\mid l_k=\ell
    \sim
    \mathcal{N}\!\left(
        \mathbf{m}_\ell,
        \boldsymbol{\Lambda}^\ell
        +
        \theta_c^\ell\boldsymbol{\gamma}_c^\ell(\boldsymbol{\gamma}_c^\ell)^\top
    \right),
    \label{eq:conditional_latent}
\end{equation}
where $\boldsymbol{\Lambda}^\ell$ is positive semidefinite, $\boldsymbol{\gamma}_c^\ell(\boldsymbol{\gamma}_c^\ell)^\top$ is a rank-one correlation component, and $\theta_c^\ell\ge 0$ controls its strength. The zero-mean setting is recovered by taking $\mathbf{m}_\ell=0$, and a shared-correlation setting by taking $\theta_c^\ell=\theta_c$ and $\boldsymbol{\gamma}_c^\ell=\boldsymbol{\gamma}_c$ for all $\ell$.

The signal component is governed by the effective second moment
\begin{equation}
    \bar{\boldsymbol{\Lambda}}
    :=
    \mathbb{E}[\mathbf{c}_k\mathbf{c}_k^\top]
    =
    \sum_{\ell=1}^{L}
    \pi_\ell
    \left(
        \boldsymbol{\Lambda}^\ell
        +
        \theta_c^\ell\boldsymbol{\gamma}_c^\ell(\boldsymbol{\gamma}_c^\ell)^\top
        +
        \mathbf{m}_\ell\mathbf{m}_\ell^\top
    \right).
    \label{eq:effective_cov_setup}
\end{equation}
Thus, class means and low-rank correlations enter the theory through the same object: a low-rank contribution to the effective covariance. This is the structural reduction used throughout the paper.

\subsection{Linear GAN Model}

The generator produces fake samples according to
\begin{equation}
    \tilde{\mathbf{y}}_k
    =
    \mathbf{V}_k\tilde{\mathbf{c}}_k
    +
    \sqrt{\eta_G}\,\tilde{\mathbf{a}}_k,
    \label{eq:generator_model}
\end{equation}
where $\mathbf{V}_k\in\mathbb{R}^{n\times d}$ is the learned generator subspace, $\tilde{\mathbf{a}}_k\sim\mathcal{N}(0,\mathbf{I}_n)$ is generator noise, and $\eta_G>0$ is the generator noise level. We denote the generator effective latent second moment by
\begin{equation}
    \bar{\tilde{\boldsymbol{\Lambda}}}
    :=
    \mathbb{E}[\tilde{\mathbf{c}}_k\tilde{\mathbf{c}}_k^\top].
    \label{eq:generator_effective_cov}
\end{equation}
In general, $\bar{\tilde{\boldsymbol{\Lambda}}}$ need not equal $\bar{\boldsymbol{\Lambda}}$; this mismatch is central to the informed-versus-uninformed experiments in Section~\ref{sec:experiments}.

\subsubsection{Discriminator: Multi-Feature EBGAN}

We adopt the energy-based GAN formulation~\citep{ebgan}. The discriminator measures projection energy onto a learned $d$-dimensional discriminator subspace:
\begin{equation}
    \mathcal{D}(\mathbf{y};\mathbf{W})
    =
    \|\mathbf{y}^\top\mathbf{W}\|^2,
    \label{eq:energy_discriminator}
\end{equation}
where $\mathbf{W}\in\mathbb{R}^{n\times d}$ has orthonormal columns. Unlike the single-feature discriminator of~\citet{solvable}, the multi-feature discriminator evaluates all coordinates simultaneously. This simultaneous interaction is essential for the correlation effects.

\subsubsection{Training Procedure}

Training follows a two-timescale stochastic gradient descent/ascent scheme:
\begin{equation}
\begin{split}
    \mathbf{V}_{k+1}
    &=
    \mathbf{V}_k
    -
    \frac{\tilde{\tau}}{n}
    \nabla_{\mathbf{V}_k}
    \mathcal{L}(\mathbf{y}_k,\tilde{\mathbf{y}}_k;\mathbf{W}_k),\\
    \mathbf{W}_{k+1}
    &=
    \mathbf{W}_k
    +
    \frac{\tau}{n}
    \nabla_{\mathbf{W}_k}
    \mathcal{L}(\mathbf{y}_k,\tilde{\mathbf{y}}_k;\mathbf{W}_k),
\end{split}
\label{eq:sgda_updates}
\end{equation}
where $\tau$ and $\tilde\tau$ are the discriminator and generator learning rates. The loss is
\begin{equation}
    \mathcal{L}(\mathbf{y},\tilde{\mathbf{y}};\mathbf{W})
    =
    F(\hat{D}(\mathbf{y}^\top\mathbf{W}))
    -
    \hat{F}(\hat{D}(\tilde{\mathbf{y}}^\top\mathbf{W}))
    -
    \frac{\lambda}{2}\operatorname{tr}\!\big(H(\mathbf{W}^\top\mathbf{W})\big)
    +
    \frac{\lambda}{2}\operatorname{tr}\!\big(H(\mathbf{V}^\top\mathbf{V})\big).
    \label{eq:gan_loss}
\end{equation}
In the solvable specialization analyzed below, $\hat{D}(\mathbf{x})=\|\mathbf{x}\|$ and $F(x)=\hat{F}(x)=x^2/2$, giving the quadratic energy in~\eqref{eq:energy_discriminator}. The regularizer drives $\mathbf{V}$ and $\mathbf{W}$ toward orthonormality as $\lambda\to\infty$, equivalently implemented by orthonormalizing after each update.

\subsection{Effective Covariance Reduction}\label{subsec:second_moment}

The key reduction follows from the quadratic discriminator. Since
\begin{equation}
    \mathcal{D}(\mathbf{y};\mathbf{W})
    =
    \|\mathbf{W}^\top\mathbf{y}\|^2
    =
    \operatorname{tr}\!\left(\mathbf{W}^\top\mathbf{y}\mathbf{y}^\top\mathbf{W}\right),
\end{equation}
the expected energy and population gradients depend on the data distribution through $\mathbb{E}[\mathbf{y}\mathbf{y}^\top]$. For true and generated samples,
\begin{equation}
    \mathbb{E}[\mathbf{y}_k\mathbf{y}_k^\top]
    =
    \mathbf{U}\bar{\boldsymbol{\Lambda}}\mathbf{U}^\top
    +
    \eta_T\mathbf{I}_n,
    \qquad
    \mathbb{E}[\tilde{\mathbf{y}}_k\tilde{\mathbf{y}}_k^\top]
    =
    \mathbf{V}_k\bar{\tilde{\boldsymbol{\Lambda}}}\mathbf{V}_k^\top
    +
    \eta_G\mathbf{I}_n.
    \label{eq:sample_second_moments}
\end{equation}
Consequently, the conditional labels, class means, and latent correlations are not tracked separately by the macroscopic dynamics. They are absorbed into $\bar{\boldsymbol{\Lambda}}$ and $\bar{\tilde{\boldsymbol{\Lambda}}}$. The rank-one terms $\mathbf{m}_\ell\mathbf{m}_\ell^\top$ and $\theta_c^\ell\boldsymbol{\gamma}_c^\ell(\boldsymbol{\gamma}_c^\ell)^\top$ later provide the signal-boosting directions analyzed in Section~\ref{sec:feature_boosting}.

\subsection{Informed and Uninformed Generator Covariance}\label{subsec:informed}

A central modeling choice is the generator effective covariance $\bar{\tilde{\boldsymbol{\Lambda}}}$. We call the generator \emph{informed} when $\bar{\tilde{\boldsymbol{\Lambda}}}$ is chosen to approximate the data effective covariance $\bar{\boldsymbol{\Lambda}}$, for example using squared singular values from PCA estimated per class in the conditional setting. We call it \emph{uninformed} when feature strengths are flat or chosen heuristically.

This choice is not merely a tuning detail. The limiting dynamics in Section~\ref{sect:training_dynamics} contain $\bar{\tilde{\boldsymbol{\Lambda}}}$ explicitly, through terms such as $\bar{\tilde{\boldsymbol{\Lambda}}}\mathbf{R}_t$ and through the stability matrices $\mathbf{H}_t$ and $\mathbf{L}_t$ defined in~\eqref{eq:Lt_Ht}. Thus, the generator covariance affects both convergence rate and limiting subspace. In synthetic settings, where $\mathbf{U}$ is known, this determines whether the generator recovers the true subspace. On real datasets, where a ground-truth low-dimensional subspace is unavailable, we evaluate alignment with a data-driven PCA reference subspace.

\section{High-Dimensional Training Dynamics}\label{sect:training_dynamics}

We now characterize the macroscopic training dynamics. Following~\citet{solvable,multifeaturegan}, we track low-dimensional overlaps among the true, generator, and discriminator subspaces and prove that these overlaps converge to a deterministic ODE as $n\to\infty$.

\subsection{Microscopic and Macroscopic States}

\begin{definition}[Microscopic state]
    The concatenated matrix
    \[
        \mathbf{X}_k \vcentcolon= \big[\,\mathbf{U},\,\mathbf{V}_k,\,\mathbf{W}_k\,\big] \in \mathbb{R}^{n\times 3d}
    \]
    is called the \emph{microscopic state} of the training process at iteration $k$.
\end{definition}

\begin{definition}[Macroscopic state]
    The tuple $\{\mathbf{P}_k,\mathbf{Q}_k,\mathbf{R}_k,\mathbf{S}_k,\mathbf{Z}_k\}$ is called the \emph{macroscopic state} at time $k$, where
    \[
        \mathbf{P}_k \vcentcolon= \mathbf{U}^\top\mathbf{V}_k, \quad
        \mathbf{Q}_k \vcentcolon= \mathbf{U}^\top\mathbf{W}_k, \quad
        \mathbf{R}_k \vcentcolon= \mathbf{V}_k^\top\mathbf{W}_k,
    \]
    \[
        \mathbf{S}_k \vcentcolon= \mathbf{V}_k^\top\mathbf{V}_k, \quad
        \mathbf{Z}_k \vcentcolon= \mathbf{W}_k^\top\mathbf{W}_k.
    \]
    Equivalently,
    \begin{equation}
        \mathbf{M}_k
        =
        \mathbf{X}_k^\top\mathbf{X}_k
        =
        \begin{bmatrix}
            \mathbf{I} & \mathbf{P}_k & \mathbf{Q}_k \\
            \mathbf{P}_k^\top & \mathbf{S}_k & \mathbf{R}_k \\
            \mathbf{Q}_k^\top & \mathbf{R}_k^\top & \mathbf{Z}_k
        \end{bmatrix}
        \in\mathbb{R}^{3d\times 3d}.
        \label{eq:macro_state_matrix}
    \end{equation}
\end{definition}

The matrices $\mathbf{P}_k$ and $\mathbf{Q}_k$ measure alignment of the generator and discriminator with the true subspace, while $\mathbf{R}_k$ measures generator--discriminator alignment and $\mathbf{S}_k,\mathbf{Z}_k$ encode self-correlations.

\subsection{Assumptions and Effective Covariances}
\label{subsec:assumptions_effective_cov}

We use the following assumptions, matching~\citet{multifeaturegan} except for the generalized latent distribution.
\begin{enumerate}
    \item[(A.1)] The latent sequences $\{\mathbf{c}_k\}$ and $\{\tilde{\mathbf{c}}_k\}$ are i.i.d. with bounded moments of all orders, and $\{\mathbf{c}_k\}$ is independent of $\{\tilde{\mathbf{c}}_k\}$. In the conditional case, $\mathbf{c}_k$ follows~\eqref{eq:conditional_latent}; the generator has finite effective second moment $\bar{\tilde{\boldsymbol{\Lambda}}}=\mathbb{E}[\tilde{\mathbf{c}}_k\tilde{\mathbf{c}}_k^\top]$.
    \item[(A.2)] The noise sequences $\{\mathbf{a}_k\}$ and $\{\tilde{\mathbf{a}}_k\}$ are i.i.d. Gaussian with zero mean and covariance $\mathbf{I}_n$, and are independent of all latent variables.
    \item[(A.3)] The loss uses $H(\mathbf{A})=\log\cosh(\mathbf{A}-\mathbf{I})$, $\hat{D}(\mathbf{x})=\|\mathbf{x}\|$, and $F(x)=\hat{F}(x)=x^2/2$, with the first four derivatives of $F(\hat D(\cdot))$ and $\hat F(\hat D(\cdot))$ existing and uniformly bounded, as in~\citet{solvable}.
    \item[(A.4)] The initial state has bounded fourth moments: $\mathbb{E}[\sum_{\ell=1}^d ([\mathbf{U}]_{i,\ell}^4+[\mathbf{V}_0]_{i,\ell}^4+[\mathbf{W}_0]_{i,\ell}^4)]\le C/n^2$ for all $i$, with $C$ independent of $n$.
    \item[(A.5)] The initial macroscopic state concentrates: $\mathbb{E}\|\mathbf{M}_0-\mathbf{M}_0^\ast\|\le C/\sqrt{n}$ for some deterministic $\mathbf{M}_0^\ast$.
    \item[(A.6)] The discriminator columns are orthonormal, $\mathbf{W}_k^\top\mathbf{W}_k=\mathbf{I}_d$, so $\mathbf{Z}_k\equiv\mathbf{I}_d$.
\end{enumerate}

For consistency with the theorem notation, we write
\begin{equation}
    \bar{\boldsymbol{\Lambda}}
    =
    \mathbb{E}[\mathbf{c}_k\mathbf{c}_k^\top],
    \qquad
    \bar{\tilde{\boldsymbol{\Lambda}}}
    =
    \mathbb{E}[\tilde{\mathbf{c}}_k\tilde{\mathbf{c}}_k^\top].
    \label{eq:effective_cov}
\end{equation}
The unconditional diagonal model is recovered by taking $\bar{\boldsymbol{\Lambda}}=\boldsymbol{\Lambda}$ and $\bar{\tilde{\boldsymbol{\Lambda}}}=\tilde{\boldsymbol{\Lambda}}$.

\subsection{Macroscopic ODE: Main Theorem}

\begin{theorem}[Macroscopic ODE limit]
    \label{thm:main_ode}
    Fix $T>0$. Under Assumptions (A.1)--(A.6), and letting $\lambda\to\infty$, we have
    \begin{equation}
        \max_{0\le k\le nT}
        \mathbb{E}\big\|\mathbf{M}_k-\mathbf{M}(k/n)\big\|
        \le
        \frac{C(T)}{\sqrt{n}},
        \label{eq:ode_convergence}
    \end{equation}
    where $C(T)$ depends on $T$ but not on $n$, and $\mathbf{M}(t)\in\mathbb{R}^{3d\times 3d}$ is deterministic. Moreover, $\mathbf{M}(t)$ is the unique solution of
    \begin{equation}\label{eq:ode_limit}
    \begin{split}
        \frac{d}{dt}\mathbf{P}_t
        &=
        \tilde{\tau}\big(\mathbf{Q}_t\mathbf{R}_t^\top\bar{\tilde{\boldsymbol{\Lambda}}}+\mathbf{P}_t\mathbf{L}_t\big),\quad
        \frac{d}{dt}\mathbf{Q}_t =
        \tau\big(\bar{\boldsymbol{\Lambda}}\mathbf{Q}_t-\mathbf{P}_t\bar{\tilde{\boldsymbol{\Lambda}}}\mathbf{R}_t+\mathbf{H}_t\mathbf{Q}_t\big),\\
        \frac{d}{dt}\mathbf{R}_t
        &=
        \tau\big(\mathbf{P}_t^\top\bar{\boldsymbol{\Lambda}}\mathbf{Q}_t-\mathbf{S}_t\bar{\tilde{\boldsymbol{\Lambda}}}\mathbf{R}_t+\mathbf{H}_t\mathbf{R}_t\big)
        +
        \tilde{\tau}\big(\bar{\tilde{\boldsymbol{\Lambda}}}+\mathbf{L}_t\big)\mathbf{R}_t,\\
        \frac{d}{dt}\mathbf{S}_t
        &=
        \tilde{\tau}\big(\mathbf{R}_t\mathbf{R}_t^\top\bar{\tilde{\boldsymbol{\Lambda}}}
        +\bar{\tilde{\boldsymbol{\Lambda}}}\mathbf{R}_t\mathbf{R}_t^\top
        +\mathbf{S}_t\mathbf{L}_t+\mathbf{L}_t\mathbf{S}_t\big),\quad
        \frac{d}{dt}\mathbf{Z}_t =
        0,
    \end{split}
    \end{equation}
    with initial condition $\mathbf{M}(0)=\mathbf{M}_0^\ast$, where
    \begin{equation}
        \mathbf{L}_t
        =
        -\operatorname{diag}\!\left(\mathbf{R}_t\mathbf{R}_t^\top\bar{\tilde{\boldsymbol{\Lambda}}}\right),
        \!
        \mathbf{H}_t
        =
        \left(1-\frac{\tau\eta_G}{2}\right)
        \mathbf{R}_t^\top\bar{\tilde{\boldsymbol{\Lambda}}}\mathbf{R}_t
        -
        \left(1+\frac{\tau\eta_T}{2}\right)
        \mathbf{Q}_t^\top\bar{\boldsymbol{\Lambda}}\mathbf{Q}_t
        -
        \tau\frac{\eta_G^2+\eta_T^2}{2}\mathbf{I}.
        \label{eq:Lt_Ht}
    \end{equation}
\end{theorem}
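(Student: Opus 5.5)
We will follow the standard Markov-chain convergence template of \citet{solvable,multifeaturegan} and prove the theorem in three stages. First, a one-step drift computation for $\mathbf{M}_k$. Second, a bound on the martingale fluctuations and higher-order remainders. Third, a Gronwall-type comparison with the ODE~\eqref{eq:ode_limit}. The process $\{\mathbf{X}_k\}$ is Markov, since the samples $(\mathbf{y}_k,\tilde{\mathbf{y}}_k)$ are drawn fresh at each step and are independent of the past by (A.1)--(A.2). We therefore write
\[
\mathbf{M}_{k+1}-\mathbf{M}_k=\tfrac1n \mathbf{G}(\mathbf{M}_k)+\boldsymbol{\xi}_k+\boldsymbol{\delta}_k,
\]
where $\mathbf{G}$ denotes the right-hand side of~\eqref{eq:ode_limit}, $\boldsymbol{\xi}_k$ is a martingale difference with respect to the natural filtration, and $\boldsymbol{\delta}_k$ collects the $O(n^{-3/2})$ corrections in expectation.

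\textbf{Drift computation.} We expand the updates~\eqref{eq:sgda_updates} with the quadratic loss of (A.3). The gradient in $\mathbf{W}$ is $\mathbf{y}_k\mathbf{y}_k^\top\mathbf{W}_k-\tilde{\mathbf{y}}_k\tilde{\mathbf{y}}_k^\top\mathbf{W}_k$, and the generator gradient is $\tilde{\mathbf{y}}_k\tilde{\mathbf{c}}_k^\top$ times $\mathbf{W}_k\mathbf{W}_k^\top$-type terms. In the limit $\lambda\to\infty$, the regularizer is implemented as a projection back to orthonormality: symmetric orthonormalization for $\mathbf{W}$, and column normalization for $\mathbf{V}$, which produces the $\operatorname{diag}$ in $\mathbf{L}_t$. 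We Taylor-expand this normalization to second order in $1/n$. Projecting the increments onto $\mathbf{U},\mathbf{V}_k,\mathbf{W}_k$ and taking conditional expectations then gives the drift.

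The key point is that every first-order term is linear in $\mathbf{y}_k\mathbf{y}_k^\top$ or $\tilde{\mathbf{y}}_k\tilde{\mathbf{y}}_k^\top$. By~\eqref{eq:sample_second_moments} and the independence in (A.1), these conditional expectations depend on the latent law only through $\bar{\boldsymbol{\Lambda}}$ and $\bar{\tilde{\boldsymbol{\Lambda}}}$. Labels, the means $\mathbf{m}_\ell$, and the rank-one spikes therefore drop out, and the computation becomes that of the diagonal case with $\boldsymbol{\Lambda}$ replaced by $\bar{\boldsymbol{\Lambda}}$. We must, however, keep track of matrix ordering, because $\bar{\boldsymbol{\Lambda}}$ need not commute with $\mathbf{Q}_t,\mathbf{R}_t$. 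This accounts for the specific placement of factors such as $\mathbf{P}_t^\top\bar{\boldsymbol{\Lambda}}\mathbf{Q}_t$ and $\mathbf{R}_t^\top\bar{\tilde{\boldsymbol{\Lambda}}}\mathbf{R}_t$.

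The noise-dependent terms in $\mathbf{H}_t$ have a different origin: second-order contributions from the normalization of $\mathbf{W}$. They involve $\|\mathbf{a}_k\|^2/n\to1$ and fourth moments such as $\mathbb{E}[\mathbf{W}^\top\mathbf{y}\mathbf{y}^\top\mathbf{y}\mathbf{y}^\top\mathbf{W}]/n$. Here Gaussianity gives $\eta_T\mathbf{Q}^\top\bar{\boldsymbol{\Lambda}}\mathbf{Q}+\eta_T^2\mathbf{I}$ at leading order, and cross terms between $\mathbf{y}_k$ and $\tilde{\mathbf{y}}_k$ vanish by independence. This fourth-moment step is where non-Gaussian, non-zero-mean $\mathbf{c}_k$ could a priori contribute beyond second moments. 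We expect such terms to enter only through $\|\mathbf{c}_k\|^2$-weighted quantities that are $O(1/n)$ relative to the noise norm $\|\mathbf{a}_k\|^2\approx n$. They are therefore negligible, using the bounded moments in (A.1).

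\textbf{Fluctuations and closure.} We then show $\mathbb{E}\|\boldsymbol{\xi}_k\|^2\le C/n^2$ and $\mathbb{E}\|\boldsymbol{\delta}_k\|\le C/n^{3/2}$, uniformly for $k\le nT$. This uses the fourth-moment bounds (A.4), which propagate along the iteration by a discrete Gronwall argument on $\max_i\mathbb{E}\sum_\ell[\mathbf{V}_k]_{i\ell}^4$. It also uses the uniform derivative bounds in (A.3) and the Lipschitz property of $\mathbf{G}$ on the bounded set where $\|\mathbf{P}\|,\|\mathbf{Q}\|,\|\mathbf{R}\|\le1$, which orthonormality guarantees; (A.6) fixes $\mathbf{Z}\equiv\mathbf{I}$.

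Summing the decomposition over steps and comparing with the integral form of the ODE, we obtain
\[
\mathbb{E}\|\mathbf{M}_k-\mathbf{M}(k/n)\|\le \mathbb{E}\|\mathbf{M}_0-\mathbf{M}_0^\ast\|+\tfrac{C}{n}\textstyle\sum_{j<k}\mathbb{E}\|\mathbf{M}_j-\mathbf{M}(j/n)\|+\mathbb{E}\|\textstyle\sum_{j<k}\boldsymbol{\xi}_j\|+C/\sqrt n.
\]
The martingale term is bounded by $\sqrt{k}\cdot C/n\le C\sqrt{T/n}$ via Doob's inequality, and (A.5) controls the initial term. Gronwall's inequality then yields~\eqref{eq:ode_convergence}. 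Uniqueness of $\mathbf{M}(t)$ follows from Picard--Lindelöf, since $\mathbf{G}$ is polynomial and the solution remains in a compact invariant set. The main obstacle is the second-order expansion of the orthonormalization together with the fourth-moment averages that produce $\mathbf{H}_t$. Getting the exact constants $1\pm\tau\eta/2$ and the ordering of the non-commuting factors right requires care. For this step we would redo the computation of \citet{multifeaturegan} symbolically with general symmetric $\bar{\boldsymbol{\Lambda}}$.
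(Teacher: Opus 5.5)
Your proposal is correct and follows essentially the paper's route: a drift/martingale/remainder decomposition of $\mathbf{M}_{k+1}-\mathbf{M}_k$, closed by a Gronwall comparison (which is what the cited stochastic-approximation lemma packages), with the latent law entering only through $\bar{\boldsymbol{\Lambda}}$ and $\bar{\tilde{\boldsymbol{\Lambda}}}$; you are also more careful than the sketch in checking that the quartic terms from the $\mathbf{W}$-normalization see $\mathbf{c}_k$ only at subleading order. One caution for the ordering step you defer: any orthonormalization acts on $\mathbf{W}$ from the right ($\tilde{\mathbf{W}}\mapsto\tilde{\mathbf{W}}\mathbf{T}$) and $\mathbf{H}_t$ is indexed by the columns of $\mathbf{W}$, so your computation will give $\mathbf{Q}_t\mathbf{H}_t$ and $\mathbf{R}_t\mathbf{H}_t$ (the only placement equivariant under $\mathbf{W}\mapsto\mathbf{W}\mathbf{O}$ with $\mathbf{O}$ orthogonal), which agrees with the displayed $\mathbf{H}_t\mathbf{Q}_t$, $\mathbf{H}_t\mathbf{R}_t$ (in both the statement and the paper's sketch) only when these factors commute, e.g.\ in the diagonal sector or at the $\mathbf{Q}=\mathbf{R}=\mathbf{0}$ fixed points.
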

\begin{proof}
A sketch of the proof of this theorem is included in Appendix \ref{sect:theorem_proof}.    
\end{proof}

Theorem~\ref{thm:main_ode} is the central reduction of the paper. Conditional labels, heterogeneous class covariances, correlations, and class means do not introduce new macroscopic state variables; they only change the coefficients $\bar{\boldsymbol{\Lambda}}$ and $\bar{\tilde{\boldsymbol{\Lambda}}}$. 

\subsection{Interpretation}

The ODE describes how the generator and discriminator subspaces evolve. The terms $\bar{\boldsymbol{\Lambda}}\mathbf{Q}_t$ and $\bar{\tilde{\boldsymbol{\Lambda}}}\mathbf{R}_t$ drive alignment along high-variance effective directions, while the matrices $\mathbf{H}_t$ and $\mathbf{L}_t$ regulate stability through quadratic forms involving $\mathbf{Q}_t^\top\bar{\boldsymbol{\Lambda}}\mathbf{Q}_t$ and $\mathbf{R}_t^\top\bar{\tilde{\boldsymbol{\Lambda}}}\mathbf{R}_t$. When $\bar{\boldsymbol{\Lambda}}$ is diagonal, modes are largely separable. When $\bar{\boldsymbol{\Lambda}}$ has off-diagonal or low-rank structure, the dynamics couple coordinates, so weak features can be amplified through correlated directions. Sections~\ref{sec:conditions} and~\ref{sec:feature_boosting} make this mechanism explicit through fixed-point and eigenvalue analyses. The proof is given in Appendix~\ref{sect:theorem_proof}.

\section{Fixed Points and the Solvable Region}\label{sec:conditions}

We next analyze the fixed points of the ODE and characterize when training succeeds. The full ODE allows $\bar{\boldsymbol{\Lambda}}\ne\bar{\tilde{\boldsymbol{\Lambda}}}$ and $\eta_T\ne\eta_G$. To obtain transparent closed-form conditions, this section studies the informative feature case $\bar{\boldsymbol{\Lambda}}=\bar{\tilde{\boldsymbol{\Lambda}}}=\boldsymbol{\Lambda}, \eta_T=\eta_G=\eta.$

Here $\boldsymbol{\Lambda}$ may be either an unconditional covariance or the effective covariance of the conditional model. Appendix~\ref{sec:fixed_point_analysis} verifies the fixed points for the general effective-covariance model and gives the stability calculation for the matched specialization.

\subsection{Fixed Points and Their Interpretation}

Two fixed points are central:
\begin{itemize}
\item \textbf{Perfect recovery:} $\mathbf{P}=\mathbf{I}$ and $\mathbf{Q}=\mathbf{R}=0$. The generator aligns with the true subspace, while the discriminator has no residual direction separating real and generated samples.
\item \textbf{Total failure:} $\mathbf{P}=\mathbf{Q}=\mathbf{R}=0$. Neither generator nor discriminator has aligned with the signal subspace.
\end{itemize}
\begin{remark}
In the matched diagonal sector used for the closed-form stability analysis, Appendix~\ref{sec:fixed_point_analysis} shows that no admissible scalar fixed points with $\mathbf{R}\neq0$ occur in the two-timescale regime considered here. This rules out additional scalar ``stalemate'' equilibria in the operating regime of our experiments, but the general non-diagonal effective-covariance ODE may have more complex stationary structure.
\end{remark}
Linearizing around the two fixed points decouples the dynamics in the eigenbasis of $\boldsymbol{\Lambda}$. Stability is therefore controlled mode-by-mode by the eigenvalues $\{\lambda_i\}_{i=1}^d$ of $\boldsymbol{\Lambda}$, and in particular by $\lambda_{\min}:=\min_i\lambda_i, \lambda_{\max}:=\max_i\lambda_i.$

The lower stability boundary determines whether a mode can escape the failure fixed point, while the upper boundary determines whether recovery remains stable.

\subsection{Stability and the Solvable Region}

In the practical two-timescale regime $\tilde{\tau}<\tau$, the failure point is stable only when all effective eigenvalues remain below the noise threshold:
\begin{equation}
\lambda_{\max}<\tau\eta^2.
\label{eq:failure_stability}
\end{equation}
Thus, learning can start once at least one effective direction crosses the lower threshold, equivalently when $\lambda_{\max}>\tau\eta^2$. Conversely, perfect recovery is stable, in the regime $\tilde{\tau}<4\tau$, provided
\begin{equation}
\lambda_{\max}
<
\frac{2\tau^2\eta^2}{\tilde{\tau}}.
\label{eq:recovery_stability}
\end{equation}
Combining the lower escape condition with the upper recovery-stability condition gives the mode-wise solvable interval
\begin{equation}
\boxed{
\tau\eta^2
<
\lambda_i
<
\frac{2\tau^2\eta^2}{\tilde{\tau}}
}.
\label{eq:solvable_region}
\end{equation}
A direction whose effective eigenvalue lies inside this interval can be learned stably: it is strong enough to escape failure but not so strong that it destabilizes recovery. Full recovery of all $d$ effective modes from a small initialization requires the aggregate condition
\begin{equation}
\lambda_{\min}>\tau\eta^2,
\qquad
\lambda_{\max}<\frac{2\tau^2\eta^2}{\tilde{\tau}}.
\end{equation}
By contrast, the onset of learning is controlled by $\lambda_{\max}$: if a correlation spike raises even one effective direction above the lower threshold, learning begins in that boosted direction even if the remaining modes stay sub-threshold. This distinction is important for the signal-boosting experiments, where the goal is to show that correlation can create a learnable direction from individually weak coordinates.

Writing $\rho=\tilde{\tau}/\tau$, the upper boundary is $(2/\rho)\tau\eta^2$. Hence the interval is nonempty for $\rho<2$ and widens as the generator becomes slower relative to the discriminator, consistent with two-timescale training intuition~\citep{ttur}.

For diagonal $\boldsymbol{\Lambda}$, the condition~\eqref{eq:solvable_region} applies independently to each mode. Correlation changes this picture by rotating the eigenbasis and changing the effective eigenvalues. In particular, off-diagonal or low-rank structure can increase $\lambda_{\max}$ even when every marginal strength $d_i$ lies below the noise threshold. The next section formalizes this effect and shows when correlation helps or harms learning.

\section{Feature Correlation Mechanism for Signal Boosting}\label{sec:feature_boosting}

The previous section shows that the onset of learning is controlled by the largest eigenvalue of the effective covariance, whereas full recovery requires all relevant eigenvalues to lie inside the solvable interval. We now show how feature correlation can raise $\lambda_{\max}$ even when every marginal coordinate is individually below the noise threshold. This provides a precise mechanism by which weak features can combine into a learnable effective direction. Let
\begin{equation}
    \boldsymbol{\Lambda}=\mathbf{D}+\theta\boldsymbol{\gamma}\boldsymbol{\gamma}^\top,
    \qquad
    \mathbf{D}=\operatorname{diag}(d_1,\ldots,d_d),
    \quad
    \theta\ge 0,
    \quad
    \boldsymbol{\gamma}\ne 0.
\end{equation}
For a unit vector $\mathbf{v}$, the effective signal strength in direction $\mathbf{v}$ is
\begin{equation}
    \sigma_{\mathrm{eff}}(\mathbf{v};\theta,\boldsymbol{\gamma},\mathbf{D})
    :=
    \mathbf{v}^\top\boldsymbol{\Lambda}\mathbf{v}
    =
    \sum_{i=1}^{d} d_i v_i^2
    +
    \theta\langle\mathbf{v},\boldsymbol{\gamma}\rangle^2.
\end{equation}
Maximizing over $\|\mathbf{v}\|=1$ gives $\lambda_{\max}(\boldsymbol{\Lambda})$.

\begin{proposition}[Correlation-induced boosting of weak features]
\label{prop:boosting}
Let $\mathbf{D}=\operatorname{diag}(d_1,\ldots,d_d)$ with $d_i<\tau\eta^2$ for all $i$, so no coordinate direction is learnable in isolation under~\eqref{eq:solvable_region}. Let $\boldsymbol{\Lambda}=\mathbf{D}+\theta\boldsymbol{\gamma}\boldsymbol{\gamma}^\top$, define $\hat{\boldsymbol{\gamma}}=\boldsymbol{\gamma}/\|\boldsymbol{\gamma}\|$, and let $\bar d_{\boldsymbol{\gamma}}=\hat{\boldsymbol{\gamma}}^\top \mathbf{D}\hat{\boldsymbol{\gamma}}$. If
\begin{equation}
\theta
>
\frac{\tau\eta^2-\bar d_{\boldsymbol{\gamma}}}{\|\boldsymbol{\gamma}|^2},
\label{eq:boosting_bound}
\end{equation}
then $\lambda_{\max}(\boldsymbol{\Lambda})>\tau\eta^2$. Hence the total-failure fixed point is unstable, and learning can start along a boosted effective direction.
\end{proposition}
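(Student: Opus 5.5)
The plan is to use the Rayleigh--Ritz variational characterization of $\lambda_{\max}$, already noted just before the statement: $\lambda_{\max}(\boldsymbol{\Lambda})=\max_{\|\mathbf{v}\|=1}\sigma_{\mathrm{eff}}(\mathbf{v};\theta,\boldsymbol{\gamma},\mathbf{D})$. Because this is a maximum, it is enough to exhibit one unit test vector whose effective signal strength exceeds $\tau\eta^2$. The natural test vector is the correlation direction itself, $\mathbf{v}=\hat{\boldsymbol{\gamma}}$. This avoids solving the secular equation for the rank-one update $\mathbf{D}+\theta\boldsymbol{\gamma}\boldsymbol{\gamma}^\top$.

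First, evaluate the effective strength at $\hat{\boldsymbol{\gamma}}$. Using the definition of $\sigma_{\mathrm{eff}}$,
\[
\sigma_{\mathrm{eff}}(\hat{\boldsymbol{\gamma}})=\hat{\boldsymbol{\gamma}}^\top\mathbf{D}\hat{\boldsymbol{\gamma}}+\theta\langle\hat{\boldsymbol{\gamma}},\boldsymbol{\gamma}\rangle^2=\bar d_{\boldsymbol{\gamma}}+\theta\|\boldsymbol{\gamma}\|^2,
\]
since $\langle\hat{\boldsymbol{\gamma}},\boldsymbol{\gamma}\rangle=\|\boldsymbol{\gamma}\|$. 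Second, rearrange the hypothesis~\eqref{eq:boosting_bound}. Since $\|\boldsymbol{\gamma}\|^2>0$, it is equivalent to $\bar d_{\boldsymbol{\gamma}}+\theta\|\boldsymbol{\gamma}\|^2>\tau\eta^2$. Combining the two steps gives $\lambda_{\max}(\boldsymbol{\Lambda})\ge\sigma_{\mathrm{eff}}(\hat{\boldsymbol{\gamma}})>\tau\eta^2$.

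It is worth noting, as a consistency remark, that $\bar d_{\boldsymbol{\gamma}}$ is a convex combination $\sum_i d_i\hat\gamma_i^2$ of the $d_i$. Hence $\bar d_{\boldsymbol{\gamma}}<\tau\eta^2$, so the threshold on $\theta$ is strictly positive: boosting genuinely requires a nontrivial correlation strength.

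Third, translate the eigenvalue bound into the dynamical conclusion. I will invoke the stability analysis of Section~\ref{sec:conditions} in the matched specialization ($\bar{\boldsymbol{\Lambda}}=\bar{\tilde{\boldsymbol{\Lambda}}}=\boldsymbol{\Lambda}$, $\eta_T=\eta_G=\eta$, $\tilde\tau<\tau$). There, the linearization about $\mathbf{P}=\mathbf{Q}=\mathbf{R}=0$ decouples in the eigenbasis of $\boldsymbol{\Lambda}$, and the failure point is stable only when~\eqref{eq:failure_stability} holds.

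To make this concrete, I would linearize directly. At the failure point $\mathbf{H}\to-\tau\eta^2\mathbf{I}$ and $\mathbf{L}\to0$, so to first order $\dot{\mathbf{Q}}=\tau(\boldsymbol{\Lambda}-\tau\eta^2\mathbf{I})\mathbf{Q}$. Projecting onto the top eigenvector $\mathbf{u}$ of $\boldsymbol{\Lambda}$ yields growth rate $\tau(\lambda_{\max}-\tau\eta^2)>0$. This gives a positive Jacobian eigenvalue, so the failure point is unstable and $\mathbf{Q}$, and hence $\mathbf{P}$, escapes along $\mathbf{u}$, which is the boosted direction.

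The main obstacle is this last step, not the eigenvalue bound. I need to confirm that the $\mathbf{Q}$-block's positive eigenvalue is not cancelled by coupling to $\mathbf{P}$ and $\mathbf{R}$. At the failure point these couplings enter only at second order ($\mathbf{P}\boldsymbol{\Lambda}\mathbf{R}$, $\mathbf{P}^\top\boldsymbol{\Lambda}\mathbf{Q}$), so the Jacobian is block-triangular in $\mathbf{Q}$. I would check this explicitly, and otherwise simply cite the appendix stability calculation.
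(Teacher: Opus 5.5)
Your proposal is correct and is the paper's proof: the Rayleigh quotient at $\hat{\boldsymbol{\gamma}}$ gives $\lambda_{\max}(\boldsymbol{\Lambda})\ge\bar d_{\boldsymbol{\gamma}}+\theta\|\boldsymbol{\gamma}\|^2>\tau\eta^2$, which violates the failure-stability condition~\eqref{eq:failure_stability}. Your explicit linearization check is the appendix's total-failure computation, where at $\mathbf{P}^*=0$ the $\delta\mathbf{Q}$ and $\delta\mathbf{R}$ blocks decouple and one positive eigenvalue $\tau(\lambda_{\max}-\tau\eta^2)$ already gives instability.
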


\begin{proof}
By the Rayleigh quotient,
\[
    \lambda_{\max}(\boldsymbol{\Lambda})
    \ge
    \hat{\boldsymbol{\gamma}}^\top\boldsymbol{\Lambda}\hat{\boldsymbol{\gamma}}
    =
    \hat{\boldsymbol{\gamma}}^\top \mathbf{D}\hat{\boldsymbol{\gamma}}
    +
    \theta(\boldsymbol{\gamma}^\top\hat{\boldsymbol{\gamma}})^2
    =
    \bar d_{\boldsymbol{\gamma}}+\theta\|\boldsymbol{\gamma}\|^2.
\]
The condition~\eqref{eq:boosting_bound} implies $\lambda_{\max}(\boldsymbol{\Lambda})>\tau\eta^2$, which violates the failure stability condition~\eqref{eq:failure_stability}.
\end{proof}

Thus, even if every diagonal feature strength lies below the noise floor, a sufficiently aligned correlation component can create a dominant learnable direction.

\begin{proposition}[Correlation-induced instability]
\label{prop:instability}
Let $\boldsymbol{\Lambda}=\mathbf{D}+\theta\boldsymbol{\gamma}\boldsymbol{\gamma}^\top$, with $\hat{\boldsymbol{\gamma}}=\boldsymbol{\gamma}/\|\boldsymbol{\gamma}\|$ and $\bar d_{\boldsymbol{\gamma}}=\hat{\boldsymbol{\gamma}}^\top \mathbf{D}\hat{\boldsymbol{\gamma}}$. If
\begin{equation}
    \bar d_{\boldsymbol{\gamma}}+\theta\|\boldsymbol{\gamma}\|^2
    \ge
    \frac{2\tau^2\eta^2}{\tilde{\tau}},
    \label{eq:instability_bound}
\end{equation}
then $\lambda_{\max}(\boldsymbol{\Lambda})\ge 2\tau^2\eta^2/\tilde{\tau}$, so the perfect-recovery fixed point is unstable.
\end{proposition}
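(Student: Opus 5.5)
The argument mirrors the proof of Proposition~\ref{prop:boosting}, but uses the upper threshold~\eqref{eq:recovery_stability} in place of the lower one~\eqref{eq:failure_stability}. First, I would bound $\lambda_{\max}(\boldsymbol{\Lambda})$ from below by its Rayleigh quotient in the unit direction $\hat{\boldsymbol{\gamma}}$. Since $\boldsymbol{\Lambda}=\mathbf{D}+\theta\boldsymbol{\gamma}\boldsymbol{\gamma}^\top$ is symmetric, the variational characterization gives
\[
\lambda_{\max}(\boldsymbol{\Lambda})
\ge
\hat{\boldsymbol{\gamma}}^\top\boldsymbol{\Lambda}\hat{\boldsymbol{\gamma}}
=
\hat{\boldsymbol{\gamma}}^\top\mathbf{D}\hat{\boldsymbol{\gamma}}
+\theta\,(\boldsymbol{\gamma}^\top\hat{\boldsymbol{\gamma}})^2
=
\bar d_{\boldsymbol{\gamma}}+\theta\|\boldsymbol{\gamma}\|^2 .
\]
Here I use $\boldsymbol{\gamma}^\top\hat{\boldsymbol{\gamma}}=\|\boldsymbol{\gamma}\|$. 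Combining this with hypothesis~\eqref{eq:instability_bound} immediately gives $\lambda_{\max}(\boldsymbol{\Lambda})\ge 2\tau^2\eta^2/\tilde{\tau}$.

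The second step is to turn this spectral bound into instability of perfect recovery. The stability analysis of Section~\ref{sec:conditions} linearizes the ODE of Theorem~\ref{thm:main_ode} at $\mathbf{P}=\mathbf{I}$, $\mathbf{Q}=\mathbf{R}=0$. In the matched specialization this linearization decouples in the eigenbasis of $\boldsymbol{\Lambda}$, so the analysis is mode-wise. I would take the eigenvector $\mathbf{u}_{\max}$ attaining $\lambda_{\max}$ and look at the $2\times 2$ (or small) linear block that governs the perturbations $(\mathbf{Q},\mathbf{R})$ along this mode. That block has entries depending on $\lambda_{\max},\tau,\tilde\tau,\eta$. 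The recovery condition~\eqref{eq:recovery_stability}, valid for $\tilde\tau<4\tau$, states that this block is Hurwitz exactly when $\lambda_{\max}<2\tau^2\eta^2/\tilde\tau$. Once that threshold is crossed, some eigenvalue of the block has nonnegative real part.

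The main obstacle is the borderline case of equality in~\eqref{eq:instability_bound}. There the linearization has an eigenvalue with zero real part rather than a strictly positive one. Linear stability then fails, but genuine nonlinear instability would need a center-manifold argument. My first approach is to interpret ``unstable'' as in the paper's stability calculation in Appendix~\ref{sec:fixed_point_analysis}: the fixed point is not linearly asymptotically stable, i.e.\ the strict inequality~\eqref{eq:recovery_stability} defining the stable region is violated. That interpretation settles the equality case. If a strict statement is needed, I would state it for strict inequality in~\eqref{eq:instability_bound}, where the unstable mode has a positive real eigenvalue.

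I would also note explicitly that the conclusion depends on the largest mode only. A single boosted direction, which is precisely the correlated direction $\hat{\boldsymbol{\gamma}}$ produced by the mechanism of Proposition~\ref{prop:boosting}, suffices to destabilize recovery, whatever the other eigenvalues are.
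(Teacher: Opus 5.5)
Your proposal is correct and is essentially the paper's proof: the Rayleigh quotient at $\hat{\boldsymbol{\gamma}}$ gives $\lambda_{\max}(\boldsymbol{\Lambda})\ge\bar d_{\boldsymbol{\gamma}}+\theta\|\boldsymbol{\gamma}\|^2$, and the hypothesis then violates the recovery-stability condition~\eqref{eq:recovery_stability}. Your caveat about the equality case is a legitimate refinement the paper skips: for $\tilde\tau<4\tau$ the top-mode block $J_i$ then has zero trace and positive determinant, so only linear asymptotic stability is lost. Also, for the instability direction you do not actually need $\tilde\tau<4\tau$, since a nonnegative trace alone forces an eigenvalue with nonnegative real part.
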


\begin{proof}
The same Rayleigh-quotient lower bound gives $\lambda_{\max}(\boldsymbol{\Lambda})\ge\bar d_{\boldsymbol{\gamma}}+\theta\|\boldsymbol{\gamma}\|^2$. Condition~\eqref{eq:instability_bound} violates the upper stability boundary~\eqref{eq:recovery_stability}.
\end{proof}

Together, Propositions~\ref{prop:boosting} and~\ref{prop:instability} show that correlation is beneficial only in a bounded range: a moderate spike lifts $\lambda_{\max}$ above the failure threshold, while an excessive spike pushes it beyond the recovery threshold. In the conditional model, the same conclusions hold with $\boldsymbol{\Lambda}$ replaced by the effective covariance $\bar{\boldsymbol{\Lambda}}$.

\section{Experimental Results}\label{sec:experiments}

We validate the theory in two stages. First, controlled simulations compare the ODE with high-dimensional SGD and test the predicted stability boundary. Second, image experiments examine how informed and uninformed generator covariances affect the learned generative subspace on MNIST, FashionMNIST, and CIFAR-10. The goal is not to benchmark a practical image generator, but to test the qualitative predictions of the solvable model.

\subsection{Theory vs. Simulations}

We solve the macroscopic ODE and compare it to high-dimensional SGD with $n=1500$. We track the generator overlap
\begin{equation}
    \operatorname{overlap}(\mathbf{U},\mathbf{V})
    =
    \operatorname{tr}(\mathbf{P}\mathbf{S}^{-1}\mathbf{P}^\top)/d,
\end{equation}
which is the mean squared cosine of the principal angles. Unless otherwise stated, we use $\tau=0.1$, $\tilde{\tau}=0.01$, and $d=2$.

\begin{figure}[htb]
    \centering
    \begin{subfigure}[b]{0.32\textwidth}
        \centering
        \includegraphics[width=\textwidth]{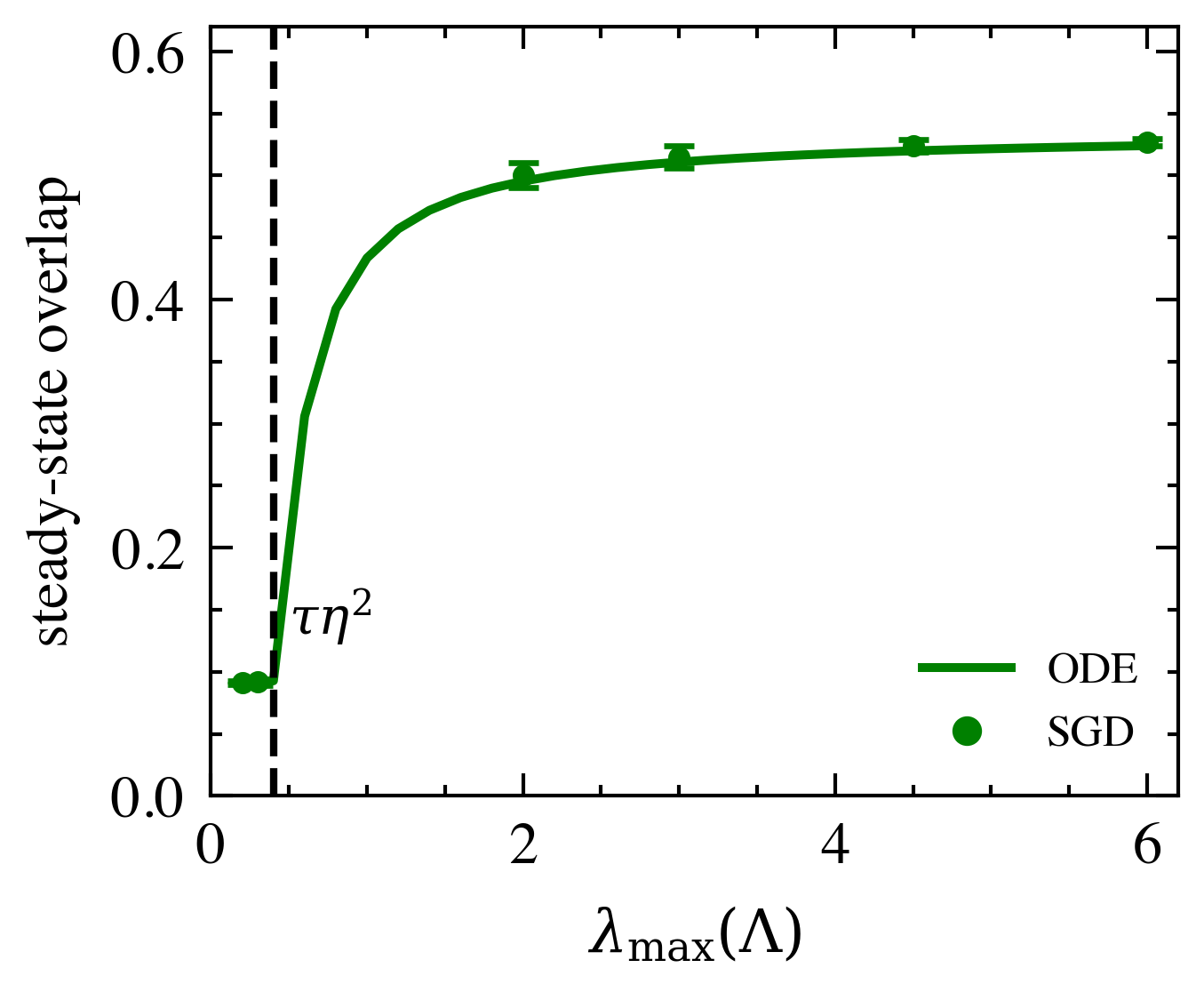}
        
    \end{subfigure}
    \hfill
    \begin{subfigure}[b]{0.32\textwidth}
        \centering
        \includegraphics[width=\textwidth]{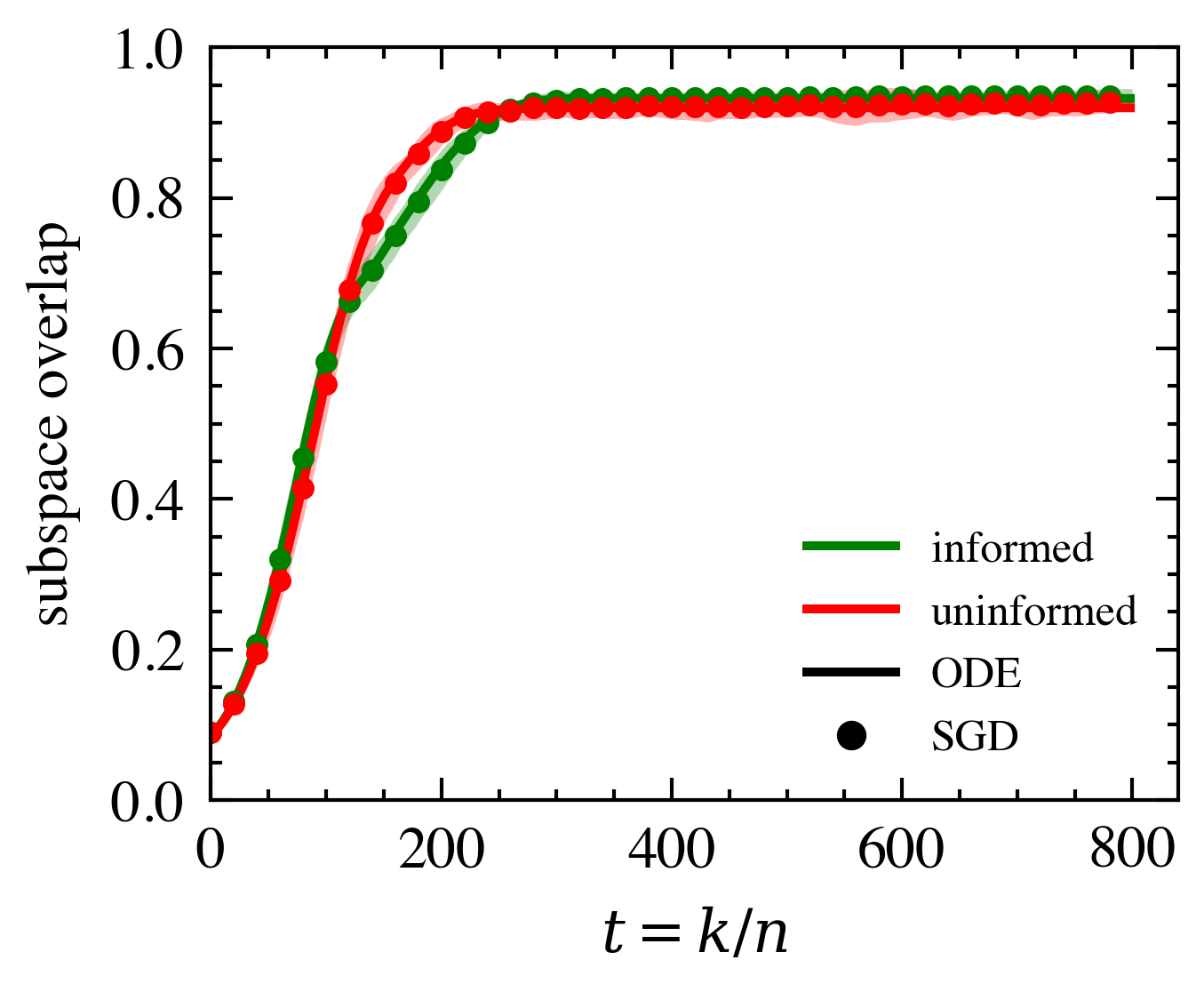}
        
    \end{subfigure}
    \hfill
    \begin{subfigure}[b]{0.32\textwidth}
        \centering
        \includegraphics[width=\textwidth]{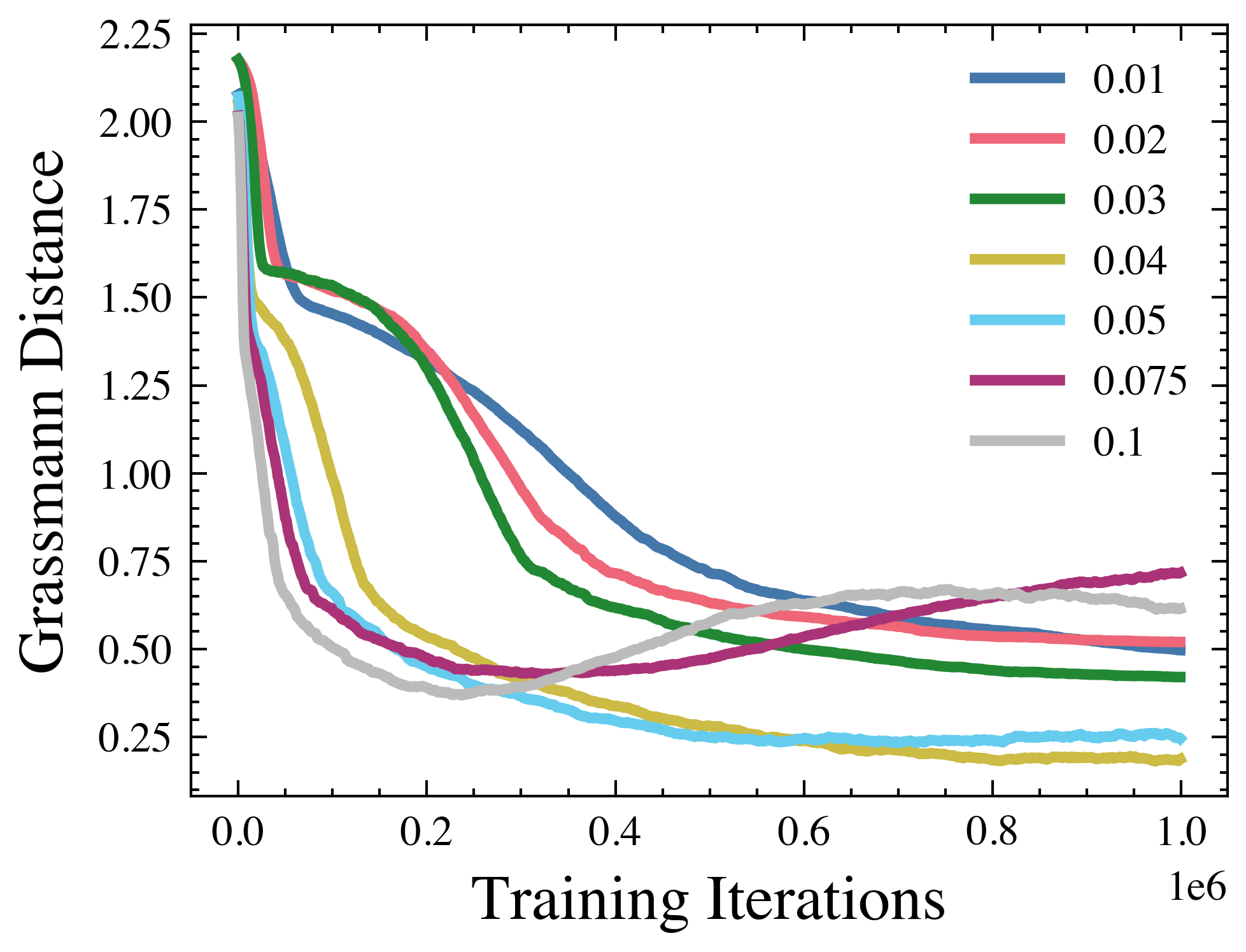}
        
    \end{subfigure}
    
    \caption{Validation of the theory. Left and middle panels report averages over 10 random seeds with $3\sigma$ error bars. \textbf{Left:} Signal boosting (Proposition~\ref{prop:boosting}): steady-state overlap as a function of $\lambda_{\max}(\boldsymbol{\Lambda})$ for $\boldsymbol{\Lambda}=0.2\mathbf{I}+\theta\boldsymbol{\gamma}\boldsymbol{\gamma}^{\top}$. Below the threshold $\tau\eta^2$ (dashed), recovery fails; once correlation lifts $\lambda_{\max}$ above this threshold, the boosted direction becomes learnable. \textbf{Middle:} Informed generator covariance ($\bar{\tilde{\boldsymbol{\Lambda}}}=\boldsymbol{\Lambda}$) versus uninformed covariance (isotropic with equal total energy) for $\boldsymbol{\Lambda}=\operatorname{diag}(2,4)$. Informed covariance yields a higher steady-state overlap. In both panels, the ODE prediction (solid) closely tracks high-dimensional SGD (dots). \textbf{Right:} Conditional model with effective eigenvalues $(1,10.8)$ and varying generator learning rate $\tilde{\tau}$. The transition from convergence to divergence occurs near the predicted boundary $\tilde{\tau}_{\max}\approx0.046$.}
    \label{fig:validation}
\end{figure}

\subsubsection{ODE vs. SGD Dynamics: Signal Boosting and Informed Covariance}

Figure~\ref{fig:validation}(a) demonstrates signal boosting. The effective covariance is $\boldsymbol{\Lambda}=0.2\mathbf{I}+\theta\boldsymbol{\gamma}\boldsymbol{\gamma}^\top$, and varying $\theta$ changes $\lambda_{\max}(\boldsymbol{\Lambda})$. Below the threshold $\tau\eta^2$, the steady-state overlap remains near zero. Once the low-rank correlation lifts $\lambda_{\max}$ above the threshold, the boosted direction becomes recoverable. The SGD points match the ODE curves.

Figure~\ref{fig:validation}(b) compares informed and uninformed generator covariance for $\boldsymbol{\Lambda}=\operatorname{diag}(2,4)$. The informed generator uses $\bar{\tilde{\boldsymbol{\Lambda}}}=\boldsymbol{\Lambda}$, while the uninformed generator uses an isotropic covariance with the same total energy. The informed spectrum reaches a higher steady-state overlap, illustrating that the generator covariance affects the limiting subspace rather than only the convergence rate.

\subsubsection{Conditional Model and Stability Boundary}

Figure~\ref{fig:validation}(c) validates the conditional model and stability boundary on a two-class setup with $L=2$ and $d=2$. Class 1 has probability $0.3$ and strengths $(1,1)$; class 2 has probability $0.7$ and strengths $(1,15)$, yielding effective covariance $\bar{\boldsymbol{\Lambda}}=\operatorname{diag}(1,10.8)$. With $\tau=0.5$, $\eta=1$, and varying $\tilde{\tau}\in[0.01,0.1]$, the solvable-region condition predicts a maximal admissible generator learning rate above which the strong mode destabilizes recovery. We track the Grassmann distance $d(\mathbf{U},\mathbf{V})=(\sum_i\theta_i^2)^{1/2}$, where $\theta_i$ are principal angles. The empirical transition near $\tilde{\tau}_{\max}\approx0.046$ matches the prediction.

\subsection{Image Experiments: Informed and Uninformed Feature Strengths}

\begin{figure}[htb]
    \centering
    \includegraphics[width=0.95\textwidth]{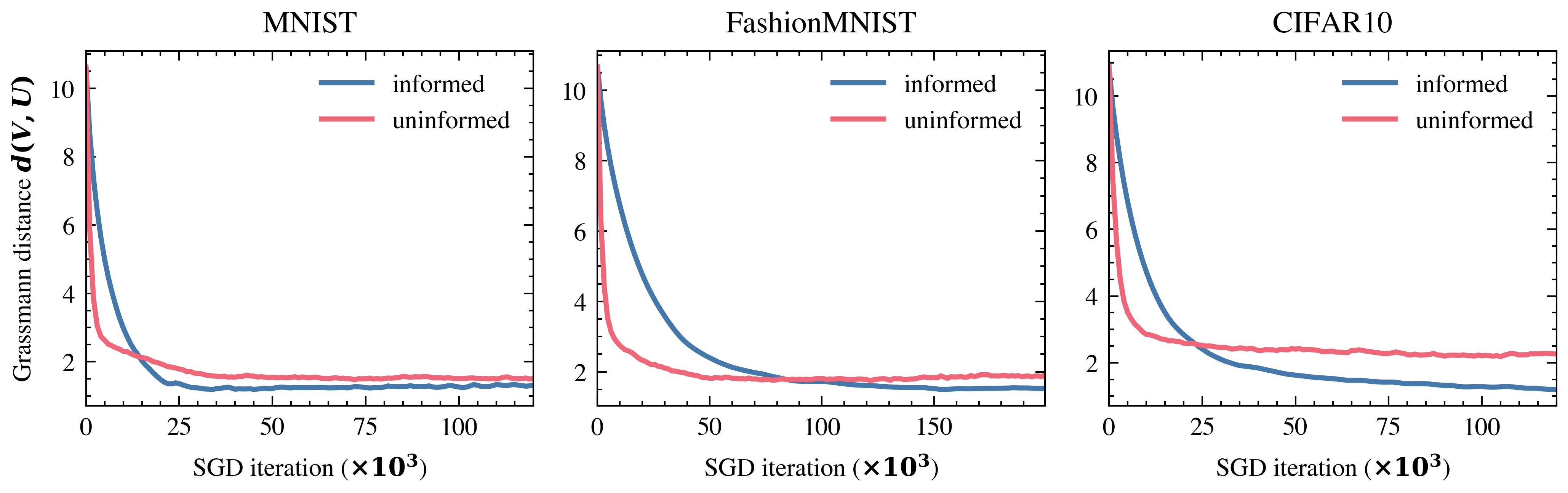}
    \caption{Informed vs. uninformed feature strengths across three datasets. Grassmann distance between the learned generator subspace and the top-$d$ PCA reference subspace during training ($d=64$), for MNIST, FashionMNIST, and CIFAR-10 (grayscale). Informed strengths set the generator covariance to the per-class PCA spectrum; uninformed strengths use values sampled uniformly from $[0.5,5]$. Informed models converge more slowly but reach a lower steady-state distance, while uninformed models converge quickly to a biased plateau.}
    \label{fig:grassmann_3data}
\end{figure}

We next test whether the generator covariance predicted by the theory matters on real image datasets. Since the true low-dimensional subspace is unknown, we use the top-$d$ PCA subspace of the dataset as a reference and measure the Grassmann distance between this reference and the learned generator subspace. This is not a claim of access to a ground-truth $\mathbf{U}$; rather, it provides a consistent data-driven comparison across runs.

For MNIST~\citep{mnist}, FashionMNIST~\citep{fashionmnist}, and CIFAR-10~\citep{cifar}, we compare two generator choices. The informed model sets the generator covariance to the per-class PCA spectrum. The uninformed model uses strengths sampled uniformly from $[0.5,5]$. Figure~\ref{fig:grassmann_3data} shows a consistent pattern: informed strengths converge more slowly but reach a lower steady-state Grassmann distance, whereas uninformed strengths converge faster but saturate at a biased plateau. This supports the ODE prediction that $\bar{\tilde{\boldsymbol{\Lambda}}}$ influences the limiting subspace.

Figure~\ref{fig:cc_samples} shows samples from the informed conditional model. For each dataset, samples are drawn from the per-class latent distribution and mapped through the shared learned basis $\mathbf{V}$. Despite the linear generator, the samples are recognizable and class structured, indicating that matching the conditional covariance is important for perceptually coherent generation in this solvable model. In CIFAR-10 case, we do not get sharp images due to the data complexity and limitations of a linear model.

\begin{figure}[htb]
    \centering
    \begin{subfigure}[b]{0.30\textwidth}
        \centering\includegraphics[width=0.9\textwidth]{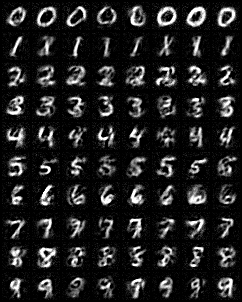}
    \end{subfigure}\hfill
    \begin{subfigure}[b]{0.30\textwidth}
        \centering\includegraphics[width=0.9\textwidth]{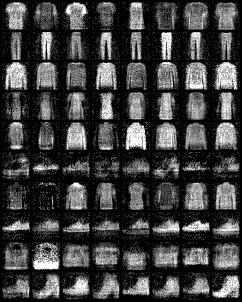}
    \end{subfigure}\hfill
    \begin{subfigure}[b]{0.30\textwidth}
        \centering\includegraphics[width=0.9\textwidth]{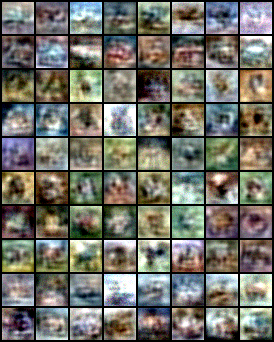}
    \end{subfigure}
    \caption{Class-conditional samples from the informed conditional model (MNIST, FashionMNIST, CIFAR-10). For each dataset, the generator covariance is set per class to the data's PCA spectrum, and samples are drawn from the per-class latent distribution $\mathcal{N}(\tilde{\mathbf{m}}_\ell,\tilde{\boldsymbol{\Lambda}}_\ell)$ through the shared learned basis $\mathbf{V}$. Rows correspond to classes and columns to samples.}
    \label{fig:cc_samples}
\end{figure}

\section{Discussion and Conclusion}\label{sec:conclusion}

We developed a high-dimensional theory for multi-feature solvable GANs trained on data with conditional, correlated, and non-zero-mean latent structure. The main result is an effective-covariance reduction: for the quadratic energy discriminator, heterogeneous latent structure enters the macroscopic dynamics only through the second moments $\bar{\boldsymbol{\Lambda}}=\mathbb{E}[\mathbf{c}\mathbf{c}^{\top}]$ and $\bar{\tilde{\boldsymbol{\Lambda}}}=\mathbb{E}[\tilde{\mathbf{c}}\tilde{\mathbf{c}}^{\top}]$. This yields a deterministic ODE description of the high-dimensional training process and extends solvable GAN dynamics beyond the diagonal unconditional setting.

The stability analysis shows that learnability is governed by the spectrum of the effective covariance. A mode must be strong enough to escape the failure fixed point but not so strong that it destabilizes recovery. Consequently, low-rank correlations can either help or hurt: moderate correlations lift weak directions above the learnability threshold, whereas excessive correlations push the leading eigenvalue beyond the stable-recovery boundary. Numerical simulations validate the ODE, the predicted phase boundary, and this signal-boosting mechanism.

Our experiments further show that the generator covariance is a decisive modeling choice. On MNIST, FashionMNIST, and CIFAR-10, informed conditional feature strengths align the learned generator subspace more closely with the PCA reference subspace and produce coherent class-conditional samples, while uninformed strengths converge faster but saturate at a biased plateau.

The model is intentionally linear and single-layer, which is precisely what makes the training trajectory exactly solvable. Within this tractable setting, the analysis isolates how structured covariance reshapes adversarial subspace learning. Extending the effective-covariance reduction and signal-boosting mechanism to deeper nonlinear generators is a natural direction for future work.

\backmatter

\bmhead{Acknowledgements}
This work was supported by T\"UB\.{I}TAK 2232 Program (No.~118C337) and project 124E063, and by the KUIS AI Research Center. 

\section*{Statements and Declarations}
\textbf{Funding:} see Acknowledgements. \textbf{Competing interests:} none. \textbf{Data availability:} all datasets used (MNIST, FashionMNIST, CIFAR-10) are publicly available.

\begin{appendices}

\section{Details of the Stability Analysis}
\label{sec:fixed_point_analysis}

\subsection{Fixed Points for the General Case}

We verify that both $\mathbf{P}=\mathbf{I},\,\mathbf{Q}=\mathbf{R}=0$ and $\mathbf{P}=\mathbf{Q}=\mathbf{R}=0$ remain fixed points under the generalized effective-covariance model. Setting all derivatives in the macroscopic ODE to zero gives
\begin{equation}
\begin{split}
0 &= \tilde{\tau}\bigl( \mathbf{Q}_t \mathbf{R}_t^\top \bar{\tilde{\boldsymbol{\Lambda}}} + \mathbf{P}_t \mathbf{L}_t \bigr), \qquad
0 = \tau\bigl( \bar{\boldsymbol{\Lambda}} \mathbf{Q}_t - \mathbf{P}_t \bar{\tilde{\boldsymbol{\Lambda}}} \mathbf{R}_t + \mathbf{H}_t \mathbf{Q}_t \bigr), \\
0 &= \tau\bigl( \mathbf{P}_t^\top \bar{\boldsymbol{\Lambda}} \mathbf{Q}_t - \mathbf{S}_t \bar{\tilde{\boldsymbol{\Lambda}}} \mathbf{R}_t + \mathbf{H}_t \mathbf{R}_t \bigr)
     + \tilde{\tau}\bigl( \bar{\tilde{\boldsymbol{\Lambda}}} + \mathbf{L}_t \bigr) \mathbf{R}_t, \\
0 &= \tilde{\tau}\bigl( \mathbf{R}_t \mathbf{R}_t^\top \bar{\tilde{\boldsymbol{\Lambda}}}
     + \bar{\tilde{\boldsymbol{\Lambda}}} \mathbf{R}_t \mathbf{R}_t^\top + \mathbf{S}_t \mathbf{L}_t + \mathbf{L}_t \mathbf{S}_t \bigr).
\end{split}
\end{equation}
Both configurations have $\mathbf{Q}=\mathbf{R}=0$, which gives $\mathbf{L}=0$ and makes every right-hand side vanish; hence both are fixed points.

\subsection{Linearization and Stability}

For the closed-form stability analysis in Section~\ref{sec:conditions}, take the matched specialization $\bar{\boldsymbol{\Lambda}}=\bar{\tilde{\boldsymbol{\Lambda}}}=\boldsymbol{\Lambda}$ and $\eta_T=\eta_G=\eta$. At both fixed points, $\mathbf{Q}=\mathbf{R}=0$, so $\mathbf{L}=0$ and $\mathbf{H}^*=-\tau\eta^2\mathbf{I}$. Perturbing $\mathbf{P}=\mathbf{P}^*+\delta\mathbf{P}$, $\mathbf{Q}=\delta\mathbf{Q}$, $\mathbf{R}=\delta\mathbf{R}$, and $\mathbf{S}=\mathbf{S}^*+\delta\mathbf{S}$, the $\delta\mathbf{P}$ and $\delta\mathbf{S}$ directions are marginal at first order, and stability is governed by
\begin{equation}
\frac{d}{dt}\delta\mathbf{Q}
=
\tau(\boldsymbol{\Lambda}-\tau\eta^2\mathbf{I})\delta\mathbf{Q}
-
\tau\mathbf{P}^*\boldsymbol{\Lambda}\delta\mathbf{R},
\qquad
\frac{d}{dt}\delta\mathbf{R}
=
\tau(\mathbf{P}^*)^\top\boldsymbol{\Lambda}\delta\mathbf{Q}
+
\bigl((\tilde{\tau}-\tau)\boldsymbol{\Lambda}-\tau^2\eta^2\mathbf{I}\bigr)\delta\mathbf{R}.
\label{eq:linearized_subsystem}
\end{equation}

\noindent\textbf{Total failure:} For $\mathbf{P}^*=0$, the system decouples with eigenvalues $\mu_{Q,i}=\tau(\lambda_i-\tau\eta^2)$ and $\mu_{R,i}=(\tilde{\tau}-\tau)\lambda_i-\tau^2\eta^2$. Thus, for $\tilde{\tau}<\tau$ and $\boldsymbol{\Lambda}\succeq0$, the $R$-block is stable automatically, and failure is stable iff $\lambda_{\max}<\tau\eta^2$; learning begins once $\lambda_{\max}>\tau\eta^2$.
\noindent\textbf{Perfect recovery:} For $\mathbf{P}^*=\mathbf{I}$, each eigendirection gives
$J_i=\begin{psmallmatrix}
\tau(\lambda_i-\tau\eta^2) & -\tau\lambda_i\\
\tau\lambda_i & (\tilde{\tau}-\tau)\lambda_i-\tau^2\eta^2
\end{psmallmatrix}$.
The trace condition gives $\tilde{\tau}\lambda_i-2\tau^2\eta^2<0$, i.e., $\lambda_i<2\tau^2\eta^2/\tilde{\tau}$, and the determinant condition holds automatically for $\tilde{\tau}<4\tau$. Hence recovery is stable when $\lambda_{\max}<2\tau^2\eta^2/\tilde{\tau}$, yielding~\eqref{eq:solvable_region}.

\subsection{Absence of $\mathbf{R}\neq\mathbf{0}$ equilibria in the two-timescale regime}
\label{app:no_R}

The fixed points of Section~\ref{sec:conditions} all lie on the manifold $\mathbf{Q}=\mathbf{R}=\mathbf{0}$ (perfect recovery and total failure). For completeness we show that, in the two-timescale regime $\tilde\tau\ll\tau$ in which the solvable region is robust, there are \emph{no} additional equilibria with $\mathbf{R}\neq\mathbf{0}$---i.e.\ no ``stalemate'' states in which the discriminator remains locked onto a direction the generator also occupies. Thus, we are free in our analysis to assume that $\textbf{R} = 0$.

\paragraph{Reduction to a single mode.}
Take $\bar{\boldsymbol{\Lambda}}=\bar{\tilde{\boldsymbol{\Lambda}}}=\boldsymbol{\Lambda}$ diagonal, $\eta_G=\eta_T=\eta$, and $\mathbf{Z}=\mathbf{I}$ as in Section~\ref{sec:conditions}. The macroscopic initialization is diagonal ($\mathbf{P}_0=p_0\mathbf{I}$, $\mathbf{Q}_0=q_0\mathbf{I}$, $\mathbf{R}_0=p_0q_0\mathbf{I}$, $\mathbf{S}_0=\mathbf{I}$), and one checks directly from~\eqref{eq:ode_limit} that the diagonal subspace is invariant: every right-hand side maps diagonal states to diagonal states (in particular $\mathbf{L}=-\operatorname{diag}(\mathbf{R}\mathbf{R}^\top\boldsymbol{\Lambda})$ is diagonal, and all products of diagonal matrices are diagonal). Hence the $d$-dimensional system decouples into $d$ independent copies of the scalar system obtained by restricting~\eqref{eq:ode_limit} to one eigendirection. It therefore suffices to analyze a single mode with strength $\lambda>0$ and macroscopic scalars $(p,q,r,s)$, where $L=-r^2\lambda$ and $H=\big(1-\tfrac{\tau\eta}{2}\big)r^2\lambda-\big(1+\tfrac{\tau\eta}{2}\big)q^2\lambda-\tau\eta^2$.
The scalar fixed-point equations are \begin{align}
0 &= \tilde\tau\,r\lambda\,(q-pr), \label{eq:appP}\\
0 &= \tau\big(\lambda q-p\lambda r+Hq\big), \label{eq:appQ}\\
0 &= \tau\big(p\lambda q-s\lambda r+Hr\big)+\tilde\tau(\lambda+L)r, \label{eq:appR}\\
0 &= 2\tilde\tau\,r^2\lambda\,(1-s). \label{eq:appS}
\end{align}

\paragraph{Structure of an $r\neq0$ equilibrium.}
Suppose $r\neq0$ (and $\lambda>0$). Equation~\eqref{eq:appS} forces $s=1$, and~\eqref{eq:appP} forces $q=pr$. Substituting $q=pr$ into~\eqref{eq:appQ} gives $\lambda pr-p\lambda r+Hpr=Hpr=0$, so either
\[
\textbf{(a)}\ \ p=0\quad(\text{hence } q=0),\qquad\text{or}\qquad \textbf{(b)}\ \ H=0\ \ (p\neq0).
\]
We treat the two branches in turn and reduce each to an explicit value of $r^2$; a genuine equilibrium additionally requires $r^2\in(0,1]$ and (in branch (b)) $p^2\in[0,1]$, since $r=\mathbf{V}^\top\mathbf{W}/n$ and $p=\mathbf{U}^\top\mathbf{V}/n$ are cosines of principal angles between unit-norm columns.

\emph{Branch (a) (pure stalemate, $p=q=0$).} Here $H=\big(1-\tfrac{\tau\eta}{2}\big)r^2\lambda-\tau\eta^2$. With $p=q=0$, $s=1$, equation~\eqref{eq:appR} reads $\tau(Hr-\lambda r)+\tilde\tau(\lambda-r^2\lambda)r=0$; dividing by $r$ and solving the resulting linear equation in $r^2$ gives
\begin{equation}
r^2=\frac{\tau^2\eta^2+(\tau-\tilde\tau)\lambda}{\lambda\big(\tau(1-\tfrac{\tau\eta}{2})-\tilde\tau\big)}.
\label{eq:app_stalemate}
\end{equation}

\emph{Branch (b) (partial alignment, $H=0$).} Using $q=pr$, the condition $H=0$ is
\begin{equation}
r^2\lambda\Big[\big(1-\tfrac{\tau\eta}{2}\big)-\big(1+\tfrac{\tau\eta}{2}\big)p^2\Big]=\tau\eta^2,
\label{eq:app_H0}
\end{equation}
while~\eqref{eq:appR} with $q=pr$, $s=1$, $H=0$ becomes $\tau\lambda r(p^2-1)+\tilde\tau\lambda r(1-r^2)=0$, i.e.
\begin{equation}
r^2=1-\frac{\tau}{\tilde\tau}\,(1-p^2).
\label{eq:app_align}
\end{equation}

\paragraph{Infeasibility under TTUR.}
Assume the two-timescale regime $\tilde\tau\ll\tau$ together with the mild condition $\tau\eta<2$ (so that
$1-\tfrac{\tau\eta}{2}>0$); both hold throughout the operating points of Section~\ref{sec:experiments}.

\emph{Branch (a).} The denominator of~\eqref{eq:app_stalemate} is
$\lambda\big(\tau(1-\tfrac{\tau\eta}{2})-\tilde\tau\big)>0$ and the numerator
$\tau^2\eta^2+(\tau-\tilde\tau)\lambda>0$, so $r^2>0$. Admissibility $r^2\le1$ would require
$\tau^2\eta^2+(\tau-\tilde\tau)\lambda\le\lambda\big(\tau(1-\tfrac{\tau\eta}{2})-\tilde\tau\big)$, i.e.
\[
\tau^2\eta^2\le\lambda\Big[\tau\big(1-\tfrac{\tau\eta}{2}\big)-\tilde\tau-(\tau-\tilde\tau)\Big]
=-\tfrac{\tau^2\eta}{2}\,\lambda<0,
\]
which is impossible for $\lambda>0$. Hence $r^2>1$: branch (a) admits no equilibrium.

\emph{Branch (b).} Non-negativity of~\eqref{eq:app_align} forces $\tfrac{\tau}{\tilde\tau}(1-p^2)\le1$, i.e.\ $p^2\ge1-\tilde\tau/\tau$. On the other hand the left-hand side of~\eqref{eq:app_H0} equals the positive number $\tau\eta^2$ and $r^2\lambda>0$, so the bracket must be positive, giving $p^2<\dfrac{1-\tfrac{\tau\eta}{2}}{1+\tfrac{\tau\eta}{2}}=\dfrac{2-\tau\eta}{2+\tau\eta}<1$. These two requirements are incompatible whenever
\[
1-\frac{\tilde\tau}{\tau}\;>\;\frac{2-\tau\eta}{2+\tau\eta},
\qquad\text{equivalently}\qquad
\frac{\tilde\tau}{\tau}\;<\;\frac{2\,\tau\eta}{2+\tau\eta},
\]
which is satisfied in the two-timescale regime $\tilde\tau\ll\tau$ (the right-hand side is a fixed positive number, e.g.\ $\tfrac{2\tau\eta}{2+\tau\eta}=\tfrac{2}{5}$ at $\tau\eta=\tfrac12$). Hence branch (b) also admits no equilibrium.

\begin{proposition}[No $\mathbf{R}\neq\mathbf{0}$ fixed points under TTUR]\label{prop:no_R}
For $\tau\eta<2$ and $\tilde\tau/\tau<2\tau\eta/(2+\tau\eta)$, the scalar system \eqref{eq:appP}--\eqref{eq:appS} has no solution with $r\neq0$ and $r^2\in(0,1]$. By the diagonal-sector reduction, the macroscopic ODE~\eqref{eq:ode_limit} therefore has no equilibrium with $\mathbf{R}\neq\mathbf{0}$ in this regime.
\end{proposition}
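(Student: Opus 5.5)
The plan is to assemble the case analysis already carried out above into a single argument, in two stages. First I reduce to a scalar problem. Then I show that every scalar equilibrium with $r\neq0$ violates the admissibility constraint $r^2\in(0,1]$ or $p^2\in[0,1]$.

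\emph{Step 1 (diagonal-sector reduction).} Under the matched specialization with $\boldsymbol{\Lambda}$ diagonal and a diagonal initialization, I verify that the right-hand side of \eqref{eq:ode_limit} maps diagonal $(\mathbf{P},\mathbf{Q},\mathbf{R},\mathbf{S})$ to diagonal matrices. In particular $\mathbf{L}_t$ and $\mathbf{H}_t$ from \eqref{eq:Lt_Ht} are diagonal there. Hence the diagonal sector is invariant. Any equilibrium in this sector is then a $d$-tuple of solutions of the scalar system \eqref{eq:appP}--\eqref{eq:appS}, one per mode with strength $\lambda_i>0$. Moreover, $\mathbf{R}\neq\mathbf{0}$ forces some mode to have $r\neq0$. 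It therefore suffices to rule out scalar solutions with $r\neq0$.

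\emph{Step 2 (structural consequences of $r\neq0$).} Dividing \eqref{eq:appS} by $2\tilde\tau r^2\lambda$ gives $s=1$, and \eqref{eq:appP} gives $q=pr$. Substituting into \eqref{eq:appQ} leaves $Hpr=0$. This splits the analysis into branch (a), $p=q=0$, and branch (b), $H=0$ with $p\neq0$.

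\emph{Step 3 (branch (a)).} Here \eqref{eq:appR} becomes linear in $r^2$ after dividing by $r$, which yields \eqref{eq:app_stalemate}. With $\tau\eta<2$ and $\tilde\tau<\tau(1-\tau\eta/2)$, the denominator is positive. This second inequality follows from the hypothesis on $\tilde\tau/\tau$; I will double-check it, and if it fails, a negative denominator gives $r^2<0$ directly, which is also inadmissible. Comparing the numerator with the denominator then gives numerator minus denominator $=\tau^2\eta^2+\tfrac{\tau^2\eta}{2}\lambda>0$, so $r^2>1$.

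\emph{Step 4 (branch (b)).} From \eqref{eq:app_align}, the requirement $r^2\ge0$ forces $p^2\ge 1-\tilde\tau/\tau$. From \eqref{eq:app_H0}, positivity of $\tau\eta^2$ forces the bracket to be positive, so $p^2<(2-\tau\eta)/(2+\tau\eta)$. The hypothesis $\tilde\tau/\tau<2\tau\eta/(2+\tau\eta)$ is exactly the statement that $1-\tilde\tau/\tau>(2-\tau\eta)/(2+\tau\eta)$, so the two constraints on $p^2$ are incompatible.

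\emph{Main obstacle.} The delicate point is the positivity of the branch (a) denominator, $\tau(1-\tau\eta/2)-\tilde\tau$, which the stated hypotheses do not obviously control. My plan is to handle both signs: a positive denominator gives $r^2>1$ as above, a negative one gives $r^2<0$, and a zero denominator makes the linear equation inconsistent because the numerator is positive. Either way no admissible $r$ exists.

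A secondary point is the scope of the lifting step. The reduction only covers equilibria within the diagonal sector, so the final claim should be read as a statement about the diagonal sector, consistent with the earlier remark.
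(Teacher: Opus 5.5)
Your proposal is correct and follows the paper's argument step for step: diagonal-sector invariance, $s=1$ and $q=pr$ from $r\neq0$, the split into $p=0$ versus $H=0$, then $r^2>1$ in branch (a) and the incompatible bounds $p^2\ge 1-\tilde\tau/\tau$ and $p^2<(2-\tau\eta)/(2+\tau\eta)$ in branch (b); your restriction of the conclusion to the diagonal sector also matches the paper's own Remark. Your sign split in branch (a) is actually necessary, not just a fallback: the stated hypothesis implies $\tilde\tau<\tau(1-\tau\eta/2)$ only when $\tau\eta\le 2\sqrt2-2$ (e.g.\ $\tau\eta=1$, $\tilde\tau/\tau=0.6$ satisfies the hypothesis but gives a negative denominator), and the paper's claim of a positive denominator silently relies on $\tilde\tau\ll\tau$; so drop the claimed implication, and justify positivity of the numerator $\tau^2\eta^2+(\tau-\tilde\tau)\lambda$ by noting that $2\tau\eta/(2+\tau\eta)<1$ forces $\tilde\tau<\tau$.
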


\section{Proof of Main Theorem}
\label{sect:theorem_proof}

This appendix follows the proof of the scaling-limit theorem in~\citet{solvable,multifeaturegan}; we restate the key ingredients and indicate the modification required by the generalized latent model, deferring the unchanged moment-bound algebra to those references. The proof applies to the conditional/structured data model because the labels $l_k$ are sampled i.i.d. and averaged inside the conditional expectations. Consequently, the covariance appearing in the drift is the population effective covariance $\bar{\boldsymbol{\Lambda}}=\mathbb{E}[\mathbf{c}_k\mathbf{c}_k^\top]$, and likewise $\bar{\tilde{\boldsymbol{\Lambda}}}$ for the generator. The argument relies on the following stochastic-approximation result from~\citet{subspace-incomplete}.

\begin{lemma}
\label{lemma:key_result}
Consider a sequence of stochastic processes $\{\mathbf{x}_k^{(n)}, k=0,1,\dots,\lfloor nT\rfloor\}_{n\ge1}$ for fixed $T>0$. Suppose
\begin{equation}
    \mathbf{x}_{k+1}^{(n)}-\mathbf{x}_k^{(n)}
    =
    \frac{1}{n}\phi(\mathbf{x}_k^{(n)})
    +
    \boldsymbol{\rho}_k^{(n)}
    +
    \boldsymbol{\delta}_k^{(n)},
\end{equation}
where
\begin{enumerate}
    \item[(C.1)] $\sum_{k'=0}^k\boldsymbol{\rho}_{k'}^{(n)}$ is a martingale and $\mathbb{E}\|\boldsymbol{\rho}_k^{(n)}\|^2\le C(T)/n^{1+\epsilon_1}$ for some $\epsilon_1>0$;
    \item[(C.2)] $\mathbb{E}\|\boldsymbol{\delta}_k^{(n)}\|\le C(T)/n^{1+\epsilon_2}$ for some $\epsilon_2>0$;
    \item[(C.3)] $\phi$ is Lipschitz;
    \item[(C.4)] $\mathbb{E}\|\mathbf{x}_k^{(n)}\|^2\le C$ for all $k\le\lfloor nT\rfloor$;
    \item[(C.5)] $\mathbb{E}\|\mathbf{x}_0^{(n)}-\mathbf{x}_0^*\|\le C/n^{\epsilon_3}$ for some $\epsilon_3>0$ and deterministic $\mathbf{x}_0^*$.
\end{enumerate}
Then $\|\mathbf{x}_k^{(n)}-\mathbf{x}(k/n)\|\le C(T)n^{-\min(\epsilon_1/2,\epsilon_2,\epsilon_3)}$, where $\mathbf{x}(t)$ is the unique solution of $\frac{d}{dt}\mathbf{x}(t)=\phi(\mathbf{x}(t))$ with $\mathbf{x}(0)=\mathbf{x}_0^*$.
\end{lemma}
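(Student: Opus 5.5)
The statement to prove is Lemma~\ref{lemma:key_result}, the stochastic-approximation lemma quoted from~\citet{subspace-incomplete}. The plan is a discrete Gr\"onwall argument comparing the process with an Euler discretization of the ODE. Write $\mathbf{x}_k=\mathbf{x}_k^{(n)}$ and $\mathbf{e}_k=\mathbf{x}_k-\mathbf{x}(k/n)$.

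\emph{Decomposition of the ODE solution.} Summing the recursion from $0$ to $k-1$ gives
\[
\mathbf{x}_k=\mathbf{x}_0+\frac1n\sum_{j<k}\phi(\mathbf{x}_j)+\mathbf{m}_k+\sum_{j<k}\boldsymbol{\delta}_j,\qquad \mathbf{m}_k=\sum_{j<k}\boldsymbol{\rho}_j .
\]
For the ODE, I would write
\[
\mathbf{x}(k/n)=\mathbf{x}_0^*+\frac1n\sum_{j<k}\phi(\mathbf{x}(j/n))+\boldsymbol{\epsilon}_k .
\]
Since $\phi$ is Lipschitz by (C.3), the solution exists uniquely and is bounded on $[0,T]$. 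Hence $\phi(\mathbf{x}(t))$ is bounded and $\mathbf{x}(\cdot)$ is Lipschitz in $t$. This gives a local truncation error $O(n^{-2})$ per step and $\|\boldsymbol{\epsilon}_k\|\le C(T)/n$ uniformly for $k\le nT$.

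\emph{Gr\"onwall step.} Subtracting the two expressions and using (C.3),
\[
\|\mathbf{e}_k\|\le \|\mathbf{e}_0\|+\frac{L}{n}\sum_{j<k}\|\mathbf{e}_j\|+\max_{i\le nT}\|\mathbf{m}_i\|+\sum_{j<nT}\|\boldsymbol{\delta}_j\|+\frac{C}{n}.
\]
The discrete Gr\"onwall inequality then gives
\[
\max_{k\le nT}\|\mathbf{e}_k\|\le e^{LT}\Big(\|\mathbf{e}_0\|+\max_i\|\mathbf{m}_i\|+\sum_j\|\boldsymbol{\delta}_j\|+C/n\Big).
\]
Because the right-hand side no longer involves $\mathbf{e}$ inside a sum, taking expectations is legitimate.

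\emph{Bounding the error terms.} Three terms remain, one per hypothesis.
\begin{itemize}
\item The initial error satisfies $\mathbb{E}\|\mathbf{e}_0\|\le Cn^{-\epsilon_3}$ by (C.5).
\item The drift error satisfies $\mathbb{E}\sum_j\|\boldsymbol{\delta}_j\|\le nT\cdot C(T)n^{-1-\epsilon_2}$ by (C.2).
\item For the martingale term, Doob's $L^2$ maximal inequality and orthogonality of martingale increments give
\[
\mathbb{E}\max_i\|\mathbf{m}_i\|^2\le 4\,\mathbb{E}\|\mathbf{m}_{\lfloor nT\rfloor}\|^2=4\sum_j\mathbb{E}\|\boldsymbol{\rho}_j\|^2\le 4T\,C(T)\,n^{-\epsilon_1}.
\]
By Jensen, $\mathbb{E}\max_i\|\mathbf{m}_i\|\le C(T)n^{-\epsilon_1/2}$.
\end{itemize}
Collecting these gives the rate $n^{-\min(\epsilon_1/2,\epsilon_2,\epsilon_3)}$, with the $1/n$ discretization term dominated. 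The conclusion holds in expectation, uniformly over $k\le nT$, which is the form used in~\eqref{eq:ode_convergence}.

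\emph{Main obstacle.} Hypothesis (C.4) is not needed in this globally Lipschitz version. I expect the real difficulty to lie in applying the lemma to Theorem~\ref{thm:main_ode}, where the drift is only locally Lipschitz because it is polynomial in $\mathbf{M}$. There I would use (C.4), together with a localization or stopping-time argument on the bounded set of admissible overlaps (entries of $\mathbf{M}$ are cosines bounded by $1$), to reduce to the Lipschitz case. Within the lemma itself, the one delicate point is obtaining the maximal bound on the martingale \emph{before} taking expectations. Only the $L^2$ bound via Doob's inequality produces the $\epsilon_1/2$ exponent.
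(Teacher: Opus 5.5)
Your argument is correct and is the standard Euler-comparison plus discrete Gr\"onwall proof of this kind of stochastic-approximation lemma. The paper gives no proof of its own; it quotes the result from the cited reference. You are right to read the conclusion as a bound on $\mathbb{E}\|\mathbf{x}_k^{(n)}-\mathbf{x}(k/n)\|$, since a pathwise bound cannot follow from hypotheses stated only in expectation.

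\textbf{The discretization term.} Your $C/n$ Euler term is dominated only when $\min(\epsilon_1/2,\epsilon_2,\epsilon_3)\le 1$. Take $\phi(x)=x$ on $\mathbb{R}$, $\boldsymbol{\rho}_k^{(n)}=\boldsymbol{\delta}_k^{(n)}=0$ and $\mathbf{x}_0^{(n)}=\mathbf{x}_0^*=1$. All hypotheses then hold with arbitrarily large $\epsilon_i$, yet $|(1+1/n)^n-e|\sim e/(2n)$. So what your proof establishes, and what is actually true, is the rate $n^{-\min(\epsilon_1/2,\epsilon_2,\epsilon_3,1)}$. This is harmless for the $n^{-1/2}$ rate used in Theorem~\ref{thm:main_ode}.

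\textbf{Doob's inequality.} It is not the delicate point you suggest. The pre-Gr\"onwall inequality is linear in $\|\mathbf{e}_j\|$, so you may take expectations first and use $\mathbb{E}\|\mathbf{m}_k\|\le\bigl(\sum_{j<k}\mathbb{E}\|\boldsymbol{\rho}_j\|^2\bigr)^{1/2}$. That already gives the $\max_k\mathbb{E}$ form of~\eqref{eq:ode_convergence}. Doob's inequality only upgrades this to the stronger $\mathbb{E}\max_k$ bound.
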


\paragraph{Application to the macroscopic state.}
We apply Lemma~\ref{lemma:key_result} to $\mathbf{M}_k$, with
\begin{equation}
\label{eq:M_decomposition}
    \mathbf{M}_{k+1}-\mathbf{M}_k
    =
    \frac{1}{n}\phi(\mathbf{M}_k)
    +
    \underbrace{\bigl(\mathbf{M}_{k+1}-\mathbb{E}_k\mathbf{M}_{k+1}\bigr)}_{\boldsymbol{\rho}_k^{(n)}}
    +
    \underbrace{\bigl[\mathbb{E}_k\mathbf{M}_{k+1}-\mathbf{M}_k-\tfrac{1}{n}\phi(\mathbf{M}_k)\bigr]}_{\boldsymbol{\delta}_k^{(n)}},
\end{equation}
where $\phi$ is the right-hand side of~\eqref{eq:ode_limit}. Conditions (C.3) and (C.5) follow from assumptions (A.3) and (A.5). The martingale and moment estimates required for (C.1), (C.2), and (C.4) are the same as in~\citet{solvable,multifeaturegan}, using (A.6) to control the discriminator block.

\paragraph{Where the effective covariance enters.}
The latent distribution appears in the drift through expectations of quadratic latent terms. Averaging the generator and discriminator updates over latent variables, noise, and conditional labels gives
\begin{equation}
    \mathbb{E}_k\mathbf{P}_{k+1}-\mathbf{P}_k
    =
    \frac{\tilde{\tau}}{n}
    \bigl[\mathbf{Q}_k\mathbf{R}_k^\top\bar{\tilde{\boldsymbol{\Lambda}}}+\mathbf{P}_k\mathbf{L}_k\bigr],
    \quad
    \mathbb{E}_k\mathbf{Q}_{k+1}-\mathbf{Q}_k
    =
    \frac{\tau}{n}
    \bigl[\bar{\boldsymbol{\Lambda}}\mathbf{Q}_k-\mathbf{P}_k\bar{\tilde{\boldsymbol{\Lambda}}}\mathbf{R}_k+\mathbf{H}_k\mathbf{Q}_k\bigr].
\end{equation}
The remaining $\mathbf{R}_k$ and $\mathbf{S}_k$ drift terms are obtained analogously. Thus, the only change from the unconditional proof is the replacement of $(\boldsymbol{\Lambda},\tilde{\boldsymbol{\Lambda}})$ by $(\bar{\boldsymbol{\Lambda}},\bar{\tilde{\boldsymbol{\Lambda}}})$.

\noindent\textbf{Conclusion.} With (C.1)--(C.5) verified, Lemma~\ref{lemma:key_result} applies to the macroscopic process $\mathbf{M}_k$, proving Theorem~\ref{thm:main_ode}.

\end{appendices}

\bibliography{sn-bibliography}

\end{document}